\newcommand{\cv}{\mathbf{c}}
\newcommand{\xv}{\mathbf{x}}
\newcommand{\yv}{\mathbf{y}}
\newcommand{\hv}{\mathbf{h}}
\newcommand{\bv}{\mathbf{b}}
\newcommand{\wv}{\mathbf{w}}
\DeclareMathOperator{\hb}{hb}
\DeclareMathOperator{\body}{body}   
\DeclareMathOperator{\head}{head} 
\DeclareMathOperator{\Var}{Var}
\DeclareMathOperator{\lhm}{LH}
\DeclareMathOperator{\imp}{:-}
\definecolor{mygray}{rgb}{0.4,0.4,0.4}
\definecolor{mygreen}{rgb}{0,0.8,0.6}
\definecolor{myorange}{rgb}{1.0,0.4,0}
\definecolor{darkgreen}{rgb}{0,0.6,0.0}
\tiny\color{mygray},
\newcommand{\+}[1]{\ensuremath{\mathbf{#1}}}
\newtheorem{theorem}{Theorem}%[section]
\newtheorem{proposition}[theorem]{Proposition}
\newtheorem{definition}[theorem]{Definition}
\newcommand{\argmin}{\operatornamewithlimits{argmin}}
\DeclareMathOperator{\nb}{nb}
\newcommand{\colorv}{\mbox{\em color}} 
\newcommand{\aut}{\mbox{\em Aut}}
\newcommand{\lp}{{\cal LP}}
\newcommand{\needcite}[1]{}
\newcommand{\be}{\begin{equation}}
\newcommand{\ee}{\end{equation}}
\newcommand{\nbe}{\begin{equation*}}
\newcommand{\nee}{\end{equation*}}
\newcommand{\bea}{\begin{eqnarray*}}
\newcommand{\eea}{\end{eqnarray*}}
\newcommand{\ignore}[1]{}
\renewcommand{\eqref}[1]{Eq.~\ref{#1}}
\renewcommand{\iff}{\Leftrightarrow} 
\renewcommand{\implies}{\Rightarrow}
\newenvironment{myalign*}
 {\footnotesize\csname flalign*\endcsname}
 {\csname endflalign*\endcsname\ignorespacesafterend}
\begin{document}
%% Title, authors and addresses

%% use the tnoteref command within \title for footnotes;
%% use the tnotetext command for theassociated footnote;
%% use the fnref command within \author or \address for footnotes;
%% use the fntext command for theassociated footnote;
%% use the corref command within \author for corresponding author footnotes;
%% use the cortext command for theassociated footnote;
%% use the ead command for the email address,
%% and the form \ead[url] for the home page:
%% \title{Title\tnoteref{label1}}
%% \tnotetext[label1]{}
%% \author{Name\corref{cor1}\fnref{label2}}
%% \ead{email address}
%% \ead[url]{home page}
%% \fntext[label2]{}
%% \cortext[cor1]{}
%% \address{Address\fnref{label3}}
%% \fntext[label3]{}

\title{Relational Linear Programs}

%% use optional labels to link authors explicitly to addresses:
%% \author[label1,label2]{}
%% \address[label1]{}
%% \address[label2]{}
\author{
  Kristian Kersting\thanks{Martin Mladenov and Kristian Kersting were
    supported by the German Research Foundation DFG, KE 1686/2-1,
    within the SPP 1527, and the German-Israeli Foundation for
    Scientific Research and Development, 1180-218.6/2011.}\\
  \normalsize
    TU Dortmund University\\
  \and 
  Martin Mladenov$^*$\\ 
  \normalsize
    TU Dortmund University\\
    \and
  Pavel Tokmakov\\
  \normalsize
    University of Bonn, Germany\\
  }
\date{}
\maketitle

\begin{abstract}
We propose relational linear programming, a simple framework for combing linear programs (LPs) and logic programs. 
A relational linear program (RLP) is a declarative LP template defining
the objective and the constraints through the logical concepts of objects, relations, and quantified variables. This allows one to 
express the LP objective and constraints relationally for a varying number of individuals and relations among them without enumerating them.
Together with a logical knowledge base, effectively a logical program consisting of logical facts and rules, it induces a ground LP. This ground 
LP is solved using lifted linear programming. That is, symmetries within the ground LP are employed to reduce its dimensionality, if possible, and 
the reduced program is solved using any off-the-shelf LP solver. In contrast to mainstream LP template languages like AMPL,  which features a 
mixture of declarative and imperative programming styles, RLP's relational nature allows a more intuitive representation of optimization problems 
over relational domains. We illustrate this empirically by experiments on approximate inference in Markov logic networks using LP relaxations, 
on solving Markov decision processes, and on collective inference using LP support vector machines.
\end{abstract}

%% \linenumbers

%% main text
%!TEX root = main.tex
\section{Introduction}
\label{intro}
Modern social and technological trends result in an enormous increase in the amount of accessible data, with a significant portion of the resources being interrelated in a complex way and having inherent uncertainty. Such data, to which we may refer to as relational data, arise
for instance in social network and media mining, natural language processing, open information extraction, the web, bioinformatics, and 
robotics, among others, and
typically features substantial social and/or business value if become amenable to computing machinery. 
Therefore it is not surprising
that probabilistic logical languages, see e.g.~\cite{Getoor:2007aa,deraedt08springer,DeRaedt:2008aa,deraedt10srl} and references in
there for overviews, are currently provoking a lot of new AI research with tremendous theoretical and 
practical implications. By combining aspects of logic and probabilities --- a dream of AI dating back to at least the late 1980s when  Nils Nilsson introduced
the term probabilistic logics~\cite{nilsson86} --- they 
help to effectively manage both complex interactions and uncertainty in the data.
Moreover, since performing inference using traditional approaches within these languages is in principle rather costly, as tractability of traditional
inference approaches comes at the price of either coarse approximations (often without any guarantees) or restrictions in the 
language, they have motivated novel forms of inference. In essence, 
probabilistic logical models can be viewed as collections of 
building blocks respectively templates such as weighted clauses that are instantiated several times to construct a ground probabilistic model. 
If few templates are instantiated often, the resulting
ground model is likely to exhibit symmetries. In his seminal paper~\cite{poole03}, David Poole suggested to exploited these symmetries to speed
up inference within probabilistic logic models. This has motivated an active field of research known as 
lifted probabilistic inference, see e.g.~\cite{kersting12faia} and references in there.

However, instead of looking at AI through the glasses of probabilities over possible worlds, we may also approach it using
optimization. That is, we  have a preference relation over possible worlds, and
we want a best possible world according to the preference. The preference is often to
minimize some objective function.  Consider for example a typical machine learning user in 
%Actually, many machine learning and AI problems reduce
%to optimization problems. 
action solving a problem for
some data. She selects
%The modeler formulates the problem by selecting an appropriate family of
%mathematical models and massages the data amenable to modeling. 
%In contrast, many machine learning algorithms tend to be tractable by design by virtue of the powerful framework of (convex) mathematical programming. In fact, a typical machine learning application consists of 
%The modeler selects 
a model for the underlying phenomenon to be learned (choosing a learning bias), formats the raw data according to the chosen model, and then tunes the model parameters by minimizing some objective function induced by the data and the model assumptions. In the process
of model selection and validation, the core optimization problem in the last step may be solved many times.
Ideally, the optimization problem solved in the last step falls within a class of mathematical programs for which efficient and robust solvers are available. For example, linear, semidefinite and quadratic programs, found at the heart of many popular AI and 
learning algorithms, can be solved efficiently by commercial-grade software packages. 

This is an instance of the declarative
``$\text{Model} + \text{Solver}$''
paradigm currently observed a lot in AI~\cite{geffner14},  machine learning and also 
data mining~\cite{gunsNR11}:  instead of outlining how a solution should
be computed, we specify what the problem is using some high-level modeling language and solve it using general solvers.

Unfortunately, however, today's solvers for mathematical programs typically require that the mathematical program is presented in some canonical algebraic form or offer only some very restricted modeling environment. For example, a solver may require that a set of linear constraints be presented as a system of linear inequalities $\+A\xv\leq \bv$ or that a semidefinite constraint be expressed as $\sum_i \yv_i \+A_i \succeq \+C$. This may create severe difficulties for the user:
\begin{enumerate}
\item
Practical optimization involves more than just the optimization of an objective function subject to
constraints. Before optimization takes place, effort must be expended to formulate the model. 
This process of turning the intuition that defines the model ``on paper'' into a canonical form could be quite cumbersome. Consider the following example from graph isomorphism, see e.g.~\cite{atserias13siam}. Given two graphs $G$ and $H$, the LP formulation introduces a variable for every possible partial function mapping $k$ vertices of $G$ to $k$ vertices in $H$. In this case, it is not a trivial task to even come up with a convenient linear indexing of the variables, let alone expressing the resulting equations as $\+A\xv \leq \bv$. Such conversions require the user to produce and maintain complicated matrix generation code, which can be tedious and error-prone. Moreover, the reusability of such code is limited, as relatively minor modification of the equations could require large modifications of the code (for example, the user decides to switch from having variables over sets of vertices to variables over tuples of vertices). Ideally, one would like to separate the problem specification from the problem instance itself %({\todo  this sentence may go somewhere else}). 

\item Canonical forms are inherently propositional. By design they cannot model domains with a variable
number of individuals and relations among them without enumerating all of them. As already mentioned, however, many AI tasks and 
domains are best modeled in terms of individuals and relations. Agents must deal with heterogenous information of all types. 
Even more important, they 
must often build models before they 
know what individuals are in the domain and, therefore, before they know what variables exist. Hence modeling should 
facilitate the formulation of  abstract, general knowledge. 
\end{enumerate}
To overcome these downsides and triggered by the success of probabilistic logical languages, we show that optimization is 
liftable to the relational level, too. 
Specifically, we focus on linear programs which are the most tractable, best understood, and 
widely used in practice subset of mathematical programs. Here, the 
objective  is linear and the constraints involve linear (in)equalities only.  
Indeed, at the inference level within propositional models considerable 
attention has been already paid to the link between probabilistic models
and linear programs. This relation is natural since the MAP inference problem can be relaxed into linear programs. 
At the relational and lifted level, however, the link has not been established nor explored yet.

Our main contribution is
to establish this link, both at the language and at the inference level.
We introduce {\bf relational linear programming} best summarized as
\begin{align*}
%\big(\text{Lifting} + (\text{Logic} +   \text{LP} \mapsto \text{Model}) \mapsto \text{Model}\big) +  \text{Solver}:
\big((\text{LP} +  \text{Logic})  - \text{Symmetry}\big)+  \text{Solver}.
\end{align*}
The user describes a relational problem in a high level, 
relational LP modeling language and --- given a logical knowledge base (LogKB) encoding some individuals or rather data --- the system automatically induces a symmetry-reduced LP that in turn can be solved using any off-the-shelf LP solver. 
Its main building block are relational linear programs (RLPs). They are declarative LP templates defined through the logical concepts of individuals, relations, and quantified variables and allow the user to express the LP objective and constraints about a varying number of individuals without enumerating them. Together with a the LogKB referring to the individuals and relations, effectively a logical program consisting of logical facts and rules, it induces a ground LP. This ground LP
can be solved using any LP solver. 
Our illustrations of relational linear programming on several AI problems will showcase that relational programming
can considerably ease the process of turning the ``modeller's form" --- the form in which the modeler understands a problem or actually
a class of problems --- into a machine readable form since we can now deal with a varying number of individuals and relational among them in a declarative way. 
We will show this for computing optimal value function of 
Markov decision processes~\cite{littmanDK95}, for approximate inference within Markov logic 
networks~\cite{richardson2006markov} using LP relaxations as well as for collective classification~\cite{senNBGGE08}.

In particular, as another contribution, we will showcase a novel approach to {\bf collective classification by relational linear programming}. Say we want to classify people as having or not having cancer. In addition to the 
usual ``flat''  data about attributes of people like age, education and smoking habits, we have access to the social network among the people, cf. Fig.~\ref{fig:social}. This allows us to model influence among smoking habits among friend. 
%implicitly that a person is sometimes 
%passively smoking.
%, i.e., 
%if some of the friends of a particular person smoke. 
Now imagine that we want to do the classification using support vector machines (SVMs)~\cite{vapnik1998statistical} which boils down to a quadratic optimization problem. Zhou {\it et. al.}~\cite{zhou2002linear} have shown that the same problem can be modeled as an LP with only a small loss in generalization performance. 
Existing LP template language, however, would require feature engineering to capture smoking habits among friend. In contrast, in an RLP one simply adds rules
such as
\begin{equation*}
\mathtt{attribute(X, passive) \ \imp \ friends(X, Y), attribute(Y, smokes)}
\end{equation*}
saying that if two persons $\mathtt{X}$ and $\mathtt{Y}$ are friends and $\mathtt{Y}$ smokes, then $\mathtt{X}$ also smokes, at least 
passively. Moreover, as we will do in our experiments, we can formulate relational LP constraints to encode that objects that
link to each other tend to be in the same class. 
%Thus, using RLPs we can easily develop novel collective classification approaches just by programming. 
%\newsavebox{\tempbox}
\begin{figure}[t]
\captionsetup[subfigure]{labelformat=empty}
\footnotesize
\subfloat[]{
\begin{minipage}[c]{0.5\textwidth}
\begin{tikzpicture}[x=2.0cm,y=0.7cm]
\GraphInit[vstyle=Classic]
\begin{scope}[VertexStyle/.append style = {color=red,minimum size = 8pt, inner sep = 0pt}]
\Vertex[x=1,y=4,Lpos=90,L=$anna$]{a}
\Vertex[x=3,y=4,Lpos=90,L=$bob$]{b}
\Vertex[x=2,y=3,Lpos=45,L=$edward$]{e}
\Vertex[x=1,y=2,Lpos=180,L=$gary$]{g}
\Vertex[x=3,y=2,Lpos=0,L=$frank$]{f}
\Vertex[x=1,y=1,Lpos=-90,L=$iris$]{i}
\Vertex[x=3,y=1,Lpos=-90,L=$helen$]{h}

\Edge[style={bend left = 10}](a)(b)
\Edge[style={bend left = 10}](a)(e)
\Edge[style={bend left = 10}](e)(f)
\Edge[style={bend left = 10}](e)(g)
\Edge[style={bend left = 10}](g)(h)
\Edge[style={bend left = 10}](i)(h)

\end{scope}
\end{tikzpicture}
\end{minipage}}
\subfloat[]{
\begin{tabular} {l  c  c  c c}\hline
	\bf name	&	\bf age	&	\bf education	&	\bf smokes \\ \hline
	anna		&	27	&	uni		&	+ \\
	bob		&	22	&	college	&	- \\
	edward	&	25	&	college	&	- \\
	frank		&	30	&	uni		&	- \\
	gary		&	45	&	college	&	+ \\
	helen	&	35	&	school	&	+ \\
	iris		&	50	&	school	&	- \\ \hline
\end{tabular}}
\caption{Example for collective inference. There are $7$ people in a social network. Each person is described in terms of three attributes.
The class label ``cancer'' is not shown.\label{fig:social}}
\end{figure}

However,
%The compactness of the ground LP is our second contribution, and 
the benefits of relational linear programming go beyond modeling. 
%We will demonstrate that also show that linear programming can benefit even more from recent AIdevelopments. First, 
Since RLPs consist of templates that are instantiated several times to construct a ground linear model, they
are also likely to induce ground models that exhibit symmetries, and we will demonstrate how to detect and exploit them. Specifically, we will
introduce {\bf lifted linear programming} (LLP). It detects and eliminates symmetries in a linear program
in quasilinear time. Unlike lifted probabilistic inference methods such as lifted belief propagation~\cite{singla08aaai,kersting09uai,ahmadi2013mlj},  which works only with belief propagation approximations for probabilistic inference, 
LLP does not depend on any specific LP solver --- it can be seen as simply reparametrizing the linear program. As our experimental results on several AI tasks will show
%, together 
%with efficient grounding mechanisms similar to Tuffy~\cite{niu2011tuffy} 
this can result in significant efficiency gains.  

%, which has shown impressive speed-up compared to previous implementations. %{\todo We hopefully show we scale better than AMPL}. 

%From an optimisation point of view the benefit of the RLP representation is in the fact that it provides a fully declarative lifted definition of the problem which can easily be exploited by lifted solvers. %More concrete stuff here!!!!
%From a point of view of linear programming it allows to compactly represent high level structure of a program, separating it from a definition of a particular problem instance. This high level structure can then be exploited to speed up the inference. From the point of view of logic RLPs allow to write down optimisation problems over logical/relational domains in a very straight forward way. In particular machine learning tasks which can be expressed as LPs and require reasoning over relational data are easy to implement.

We proceed as follows. After touching upon related work, we start off by reviewing linear programming and existing LP template languages in Section~\ref{sec:linp}. 
Then, in Section~\ref{rlp}, we introduce 
relational linear programming, both the syntax and the semantics. Afterwards, Section~\ref{sec:lp} shows how to detect and exploit 
symmetries in linear programs. Before touching upon directions for future work and concluding, we illustrate relational linear programming 
on several examples from machine learning and AI. 
%Section \ref{lp}) and first-order logic (Section \ref{fol}). In Section \ref{rlp} we introduce the main result of our work - relational linear programming, including our approach to grounding. Section \ref{opt} talks about lifted optimisation approaches we apply to solve RLPs. We then give examples of some machine learning optimisation problems formulated as RLPs and present results of their evaluation (Section \ref{exp}). Finally we give direction for future work (Section \ref{fut}).

\section{Related Work on Relational and Lifted Linear Programming}
The present paper is a significant extension of the AISTATS 2012 conference paper~\cite{mladenov2012lifted}. 
It provides a much more concise development of lifted linear programming compared to~\cite{mladenov2012lifted} and 
the first coherent view on relational linear programming 
as a novel and promising way for scaling AI. To do so, it
develops the first relational modeling language for LPs and illustrates it empirically.  One of the advantages of the language is the closeness of its syntax to the 
mathematical notation of LP problems. This allows for a very concise and readable relational definition of linear optimization problems, which 
is supported by certain language elements from logic such as individuals, relations, and quantified variables.
The (relational) algebraic formulation of a model does not contain any hints how to process it. 
Indeed, several
modeling language for mathematical programming have been proposed. Examples of
popular modeling languages are AMPL~\cite{fourer1987ampl}, GAMS~\cite{brooke1992gams}, AIMMS~\cite{blomvall1993aimms}, and Xpress-Mosel~\cite{cirianiCH03}, but also see~\cite{kuip93,fragniere2002,wallace2005} for surveys. Although they are declarative, they focus on imperative
programming styles to define the index sets and data tables typically used to construct LP model. They do not
employ logical concepts such as clauses and unification. Moreover, since index sets and data tables are closely related to the
attributes and relations of relational database systems, there have been also proposals for ``feeding'' linear program directly from relational database systems, see 
e.g.~\cite{mitra95, atamturk2000relational, farrellM05}.
However, logic programming --- which allows to use e.g. compound terms --- was not considered and
the resulting approaches do not provide a syntax close to the 
mathematical notation of linear programs. Recently, Mattingley and Boyd~\cite{MattingleySB12} have
introduced CVXGEN, a software tool that takes a high level description of a convex optimization problem family, and automatically generates custom C code that compiles into a reliable, high speed solver for the problem family. Again concepts from logic programing were not used. 
Indeed, Gordon {\it et al.}~\cite{gordonHD09, zawadzkiGP11} developed
first-order programming (FOP) that combines the strength of mixed-integer linear programming and first-order logic. 
In contrast to the present paper, however, they focused on first-order logical reasoning and not on specifying relationally and solving efficiently 
arbitrary linear programs.  And non of these approaches as considered symmetries in LPs and how to detect and to exploit them.

Indeed, detection and exploiting symmetries within LPs  %is different from % with 
%While there has been research in 
is related to
symmetry-aware approaches %of symmetries 
in 
%LP done so far. Most of it has focused around 
{(mixed--)}integer programming~\cite{Margot_2010}. %However, as ILP and LP problems 
Howerver, they are vastly different to LPs in nature.
%, the present work differs notably from the state of the art in ILP symmetries (c.f. \cite{Margot_2010}). 
Symmetries in ILP are used for pruning the symmetric branches of search trees, thus the dominant paradigm is to add symmetry breaking inequalities, similarly to what has been done for SAT and CSP~\cite{Sellmann05}. In contrast, lifted linear programming achieves speed-up by reducing the problem size. 
For ILPs, symmetry-aware methods typically focus on pruning the search space to eliminate symmetric
solutions, see e.g.~\cite{Margot_2010} for a survey). In linear programming, however, one takes advantage of convexity
and projects the LP into the fixed space of its symmetry group~\cite{Boedi13}. The projections we investigate in
the present paper are similar in spirit. Until recently, discussions were mostly concentrated on the case
where the symmetry group of the ILP/LP consists of permutations, e.g.~\cite{Boedi10}. In such cases the problem
of computing the symmetry group of the LP can be reduced to computing the coloured automorphisms of
a ``coefficient'' graph connected with the linear program, see e.g.~\cite{berthold2008,Margot_2010}. Moreover, the reduction of the
LP in this case essentially consists of mapping variables to their orbits. Our approach subsumes
this method, as we replace the orbits with a coarser equivalence relation which, in contrast to the
orbits, is computable in quasilinear time. Going beyond permutations, B{\"o}di and Herr~\cite{Boedi13} extend the scope of symmetry,
showing that any invertible linear map, which preserves the feasible region and objective of the LP, may
be used to speed-up solving. While this setting offers more compression, the symmetry detection problem
becomes even more difficult. 

After the AISTATS conference paper \cite{mladenov2012lifted}, lifted
(I)LP-MAP inference approaches for (relational) graphical models based on graph automorphisms and variants have 
been explored in several ways, which go beyond the scope of the
present paper. We refer to~\cite{bui12arxive,noessner13aaai,mladenov14aistats,apsel14aaai}.

%!TEX root = main.tex
\section{Linear Programming}\label{sec:linp}

Linear programs, see e.g.~\cite{dantzig}, have found a wide application in the fields of operations research, where they are applied to problems like multicommodity flow and optimal resource allocation, and combinatorial optimization, where they provide the basis for many approximation algorithms for hard problems such as TSP.  They have also
found their way to machine learning and AI. Consider e.g. support vector machine (SVMs), which are among the most popular models for classification. Although they are traditionally formulated as quadratic optimization problems, there are also linear program (LP) formulations of SVMs such as Zhou {\it et al.}'s LP-SVMs~\cite{zhou2002linear}. Many other max-margin approaches use LPs for inference as well 
such as LP boosting~\cite{demiriz2002linear} or LP-based large margin structure prediction~\cite{wang2009large}. They 
have also been used for the decoding step within column generation approaches to solving quadratic problem formulations of the collective classification task, see e.g.~\cite{kleinBS08,torkamaniL13} and
reference in there. However, they are not based on LP-SVMs and on relational programming. In probabilistic graphical models LP relaxations 
are used for efficient approximate MAP inference, see e.g.~\cite{wainwright2008graphical}. Finding the optimal policy for a Markov decision problem can be formulated and solved with LPs~\cite{syed2008apprenticeship}. Likewise, they have been used for inverse reinforcement learning~\cite{ng2000algorithms} where the goal
is to estimate a reward function from observed behaviour. In addition, recent work use approximate LPs for relational MDPs~\cite{sannerB09}, which scale
to problems of previously prohibitive size by avoiding grounding. However, they were not using relational LPs.  Clustering can also
be formulated via LPs, see e.g.~\cite{komodakis2008clustering}. Ataman applied LPs to learning optimal rankings for binary classification problems \cite{ataman2007learning}. In many cases, even if a learning problem itself is not posed as an LP, linear programming is used to solve some intermediate steps. 
For instance Sandler {\it et al.}~\cite{sandler2005use} phrases computing the pseudoinverse of a matrix and greedy column selection from this pseudoinverse as LPs. The resulting matrix is then used for dimensionality reduction and unsupervised text classification. So what are linear programs?

\subsection{Linear Programs}
\label{lp}

A linear program (LP) is an optimization problem that can be expressed in the following general form:
\begin{align*}
	\operatorname*{minimize}\nolimits_{\xv\in\mathbb{R}^n} & \left< \cv, \xv\right> \\
	\text{subject to }\quad & \+A\xv \leq \bv \\
	 & \+G\xv = \hv\;,
\end{align*} 
where $\+A\in \mathbb{R}^{m\times n}$ and $\+G\in \mathbb{R}^{p\times n}$ are matrices, $\bv,\ \cv$ and $\hv$ are real vectors of dimension $m,n,$ and $p$ respectively,
and $\left< \cdot, \cdot\right>$ denotes the inner product of two vectors. 
% Whenever $c$ is the $0$-vector, our task reduces 
Note that the equality constraints can be reformulated as inequality constraints to yield an LP in the so-called dual form, 
\begin{equation}\label{eq:dualform}
\begin{aligned}
\operatorname*{minimize}\nolimits_{\xv\in {\mathbb R}^n } & \left< \cv, \xv \right>  \\ 
\text{ subject to }\quad &  \+A\xv\leq \bv\;,
\end{aligned}
\end{equation}
which we represent by the triplet $L = (\+A,\bv,\cv)$.
% \ignore{
% In general, a mathematical program has the following form:
% \begin{alignat*}{3}
% 	&\operatorname*{minimize}_{\sv \in \Omega}\quad& f(\sv) \\
% 	&\text{subject to }\quad & g(\sv) \leq \zero \\
% 	& &h(\sv) = \zero&\; . \\
% \end{alignat*} 
% ({\todo do we need this general definition})The goal is to select the variables ${\sv \in \Omega}$ so as to minimise the objective function ${f(\sv)}$ possibly subject to equality ${h(\sv) = \zero}$ and inequality ${g(\sv) \leq \zero}$ constraints. Mathematical programs are divided into several classes based on the set $\Omega$ and the classes of functions ${f}$, ${g}$ and ${h}$. Generally mathematical programs are NP-hard, but several restricted subclasses are tractable. The most efficient algorithms are available for the class of linear programs (LPs). A mathematical program is called linear if $\sv \in \mathbb{R}^n$ and functions ${f}$, ${g}$ and ${h}$ are linear in ${\sv}$. In fact, it is known that a class of LPs in in P \cite{khachiyan1980polynomial}.
% }
% Additional interest to the field has been recently attracted by research on lifted linear programming \cite{mladenov2012lifted}. The idea is to read off equivalence relationships between variable from an LP and solve a potentially much smaller problem with the same optimum. This approach has been shown to be especially effective on many machine learning problems due to the structural symmetry they naturally possess.

While LP models often look intuitive on paper, applying these models in practice presents a challenge. The main issue is that the form of a problem representation that is natural for most solvers (i.e. the L triplet representation) is not the form that is natural for domain experts. 
%A canonical linear program is defined by a matrix and 2 vectors:
%\begin{align*}
%		\min & \  \cv^T \xv \\
%		\text{subject to } & A \xv \leq \bv 
%\end{align*} 
%where we have omitted the range of $\xv$ for the sake of simplicity. 
Furthermore, the matrix $\+A$ is typically sparse, i.e., having mostly $0$-entries. Modeling any real world problem in this form can be quite error prone and time consuming. In addition, it is often necessary to separate the general structure of a problem from the definition of a particular instance. For example, a flow problem formulated as an LP consists of a set of constraint for each edge in a graph, which do not depend on a particular graph. Hence, the definition of the flow LP can be separated from the specification of the graph at hand and, in turn, be applied to different graphs.

\subsection{Declarative Modelling Languages for Linear Programs}
The problems above are traditionally solved by modelling languages. They simplify LP definition by allowing to use algebraic notation instead of matrices and define an objective and constraints through parameters whose domains are defined in a separate file, thus enabling model/instance separation. 
Starting from \eqref{eq:dualform}, they typically make the involved arithmetic expressions explicit:
%That is, a linear program in this form --- also called general LP --- looks like this: {\todo MM: I don't really understand what is being said here}
\begin{equation}\label{eq:setlp}
\begin{aligned}
		\operatorname*{minimize}\nolimits_{\+x\in \mathbb{R}^n} & \sum\nolimits_{j \in P} c_j x_j \\
		\text{subject to }\quad & \sum\nolimits_{j \in P} a_{ij} x_j \leq b_i \text{ for each } i \in K\;,
\end{aligned} 
\end{equation}
where the sets ${P}$ and ${K}$ as well as the  corresponding non-zero entries of vectors ${\cv}$, ${\bv}$ and matrix ${\+A}$ are defined in a separate file. This simplified representation is then automatically translated to the matrix format and fed to a solver that the user prefers. 

To code the LP in this ``set form'', several mathematical programming modelling languages have been proposed to implement this general idea. According to NEOS solver statistics\footnote{http://www.neos-server.org/neos/report.html; accessed on April 19, 2014.}, AMPL 
is the most popular one.  We only briefly review the basic AMPL concepts. For more details, we refer to ~\cite{fourer1987ampl}. 

Based on
the ``set form'', an LP can be written in AMPL as shown in Fig.~\ref{lp:ampl}.
\begin{figure}[t]
\begin{lstlisting}[language=ampl]
set P;   #column dimension of A 
set K;   #row dimension of A
param a {j in P, i in K};   #provided as input
param c {j in P};           #provided as input
param b {i in K};           #provided as input
var x {j in P};  #determined by the solver

#the objective
minimize:  sum {j in P} c[j] * x[j];   
#the constraints
subject to sum {j in P} a[i, j]*x[j] <= b[i]; 
\end{lstlisting}
\caption{AMPL declaration scheme for a linear program in ``set form'' as shown in~\eqref{eq:setlp}.\label{lp:ampl}}  
\end{figure}
In principle an AMPL program consists of one objective and a number of ground or indexed constraints. If a constrain is indexed, (i.e. the constraint in the example above is indexed by the set K) a ground constraint is generated for every combination of values of index variables (in the example above there is just one index variable in the constrain, hence a ground constraint is generated for every value in K). The keyword \textbf{set} declares a set name, whose members are provided in a separate file. The keyword \textbf{param} declares a parameter, which may be a single scalar value or a collection of values indexed by a set. Subscripts in algebraic notation are written in square brackets as in {\tt b[i]} instead of ${b_i}$. The values to be determined by the solver are defined by the \textbf{var} keyword. The typical ${\sum}$ symbol is replaced by the \textbf{sum} keyword. The key element of the AMPL system is the so called {\bf indexing expression} 
\begin{equation*}
\mathtt{ \{j \ in \ P\}}\;. 
\end{equation*}
In addition to being a part of variable/parameter declaration, they serve both as limits for sums and as indices for constraints. Finally,
comments in AMPL start with the symbol {\tt \#}.

In relational linear programs, which we will introduce next, we are effectively mixing first order logic into AMPL. This allows us to keep AMPL's benefits that make it the number one choice for optimization experts and at the same time enable the representation of relational problems.
% over relational domains and opening perspective for even more efficient solvers that exploit the structure of a problem. 

%\input{fol}

%!TEX root = main.tex

\section{Relational Linear Programming}
\label{rlp}
The main idea of relational linear programming is to parameterize AMPL's arithmetic expressions by logical variables 
and to replace AMPL's indexing expression 
by queries to a logical knowledge base. 
Before showing how to do this, let us briefly review logic programming. For more details we refer to \cite{lloyd1987foundations,flach94,deraedt08springer}.

\subsection{Logic Programming}
\label{fol}
A logic program is a set of clauses constructed using four types of symbols: constants, variables, functors, and predicates. 
Reconsider the collective classification example from the introduction, also see in Fig.~\ref{fig:social}, and in particular the ``passive smoking'' rule.
 Formally speaking, we
have that $\mathtt{attribute/2}$
$\mathtt{friends/2}$ are {\bf predicates } (with their {\em arity}, i.e.,
number of arguments listed explicitly). The symbols  \texttt{anna, bob, edward, frank, gary, helen, iris}
are {\bf constants} and $\mathtt{X}$, and $\mathtt{Y}$ 
are {\bf variables}.
All constants and variables are also {\bf terms}
In addition, one can also have structured terms such as
$\mathtt{s(X)}$, which contains the {\bf functor} $\mathtt{s/1}$ of
arity $1$ and the term $\mathtt{X}$.  
\emph{Atoms} are predicate
symbols followed by the necessary number of terms, e.g.,
$\mathtt{friends(bob,anna)}$, $\mathtt{nat(s(X))}$,
$\mathtt{attribute(X,passive)}$, etc. {\bf Literals} are atoms $\mathtt{nat(s(X))}$ (positive literal)
and their negations $\mathtt{not \ nat(s(X))}$ (negative literals).
We are now able to define the key concept of a {\bf clause}.
They are formulas of the form
\begin{equation*}
\mathtt{A \imp B_1,\ldots ,B_m}
\end{equation*}
where $\mathtt{A}$ -- the head -- and the $\mathtt{B_j}$ --- the body --- are logical
atoms and all variables are understood to be universally
quantified. For instance, the clause
%\begin{equation*}
\[c\equiv{\mathtt{attribute(X,passive) \imp friends(X,Y),attribute(Y,smokes)}}\]
%\end{equation*}
can be read as $\mathtt{X}$ has $\mathtt{attribute}$  $\mathtt{passive}$ 
if $\mathtt{X}$ and $\mathtt{Y}$ are
$\mathtt{friends}$ and $\mathtt{Y}$ has the $\mathtt{attribute}$ $\mathtt{smokes}$.  
%We call $\mathtt{attribute(X,passive)}$ 
%the $\head(c)$ of this clause, and ${\mathtt{friends(X,Y), attribute(Y,smokes)}}$ 
%the $\body(c)$. 
Clauses with an empty body are {\bf facts}.
A  {\bf logic program} consists of a finite set of clauses.  
The set of variables in a term, atom, conjunction or 
clause $E$, is denoted as $\Var(E)$, e.g., $\Var(c) = \{\mathtt{X},\mathtt{Y}\}$.
A term, atom or clause $E$ is {\bf ground} when there is no variable occurring in
$E$, i.e. $\Var(E)=\emptyset$. 
A clause $c$ is {\bf range-restricted} when all variables in the left-hand side of $\imp$ also 
%in the head of the
%clause also 
appear in the right-hand side. %body of the clause. %, i.e., $\Var(\head(c))\subseteq\Var(\body(c))$.

A {\bf substitution} ${\theta=\{V_1/t_1,\ldots ,V_n/t_n\}}$, e.g. $\{\mathtt{Y}/\mathtt{anna}\}$, is an
assignment of terms $t_i$ to variables $V_i$. Applying a
substitution $\theta$ to a term, atom or clause $e$ yields the
instantiated term, atom, or clause $e\theta$ where all occurrences
of the variables $V_i$ are simultaneously replaced by 
$t_i$, e.g. $c\{\mathtt{Y}/\mathtt{anna}\}$ is 
%\begin{equation*}
%c^\prime\equiv
\[{\mathtt{attribute(X,passive) \imp friends(X,anna),attribute(anna,smokes)}}\;.\]
%\end{equation*}
The {\bf Herbrand base} of a logic program
$P$, denoted as $\hb(P)$, is the set of all ground atoms
constructed with the predicate, constant and function symbols in
the alphabet of $P$. A {\bf Herbrand interpretation} for a logic program $P$ is a subset of $\hb(P)$. 
A Herbrand interpretation $I$ is a {\bf model} of a clause $c$
if and only if for all substitutions $\theta$
such that $\body(c)\theta \subseteq I$ holds, it also holds that $\head(c)\theta \in I$.
%The interpretation $I$ is a model of a logic program $P$ if $I$ is
%a model of all clauses in $P$. 
A clause $c$ (logic program $P$) {\bf entails}
another clause $c^\prime$ (logic program $P^\prime$), 
denoted as
$c\models c^\prime$ ($P\models P^{\prime}$),
if and only if, each model of $c$ ($P$) is 
also a model of $c^\prime$ ($P^{\prime}$).

The {\bf least Herbrand model} $\lhm(P)$, which constitutes the semantics of the logic
program $P$, consists of all facts $f \in\hb(P)$ such that $P$
logically entails $f$, i.e. $P \models f $.  
Answering a {\bf query} $q\equiv\mathtt{\imp G_1,G_2\ldots, G_n }$  with respect to a logic program is to 
determine whether the query is entailed by the program or not. That is, 
$q$ is true in all worlds where $P$ is true.
This is often done by refutation: $P$ entails $q$ iff ${P\cup \lnot q}$ is unsatisfiable.

Logic programming  is especially convenient for representing relational data like the social graph in Fig.~\ref{fig:social}. All one needs is the binary predicate 
{\tt friend/2} to encode the edges in the social graph as well as the
predicate ${\tt attribute(X, Attr)}$ to code the attributes of the people in the social network. %In general, 
%concepts such as \textit{has friends who smoke} or even \textit{shortest path to a smoking friend} can be easily expressed with clauses.

\subsection{Relational Linear Programs}
Since our language can be seen as a logic programming variant of AMPL, we introduce its syntax in a contrast to the AMPL syntax. To do so, let us consider a well known network flow problem \cite{ahuja1993network}. The problem is to, given a finite directed graph ${G(V, E)}$ in which every edge ${(u, v) \in E}$ has a non-negative capacity ${c(u,v)}$, and two vertices ${s}$ and ${t}$ called source and sink, maximize a function $f: V\times V \rightarrow \mathbb{R}$ called flow with the first parameter fixed to ${s}$ (outgoing flow from the source node), subject to the following constraints: $f(u, v) \leq c(u, v),  \ f(u, v) \geq 0,$ and
\begin{align*}
	\sum\nolimits_{w \in V/\{s, t\}} f(u, w) = \sum\nolimits_{w \in V/\{s, t\}} f(w, u) \;,
\end{align*}
where the third constraint means that incoming flow equals to outgoing flow for internal vertices. Such flow problems can naturally be formulated 
as an LP specified in  AMPL as shown in Fig.~\ref{lp:flow}.
\begin{figure}[t]
\begin{lstlisting}[language=ampl]
set VERTEX;                                                      #vertices 
set EDGES within (VERTEX diff {sink}) cross (VERTEX diff {source}); #edges 

param source symbolic in VERTEX;                    #entrance to the graph
param sink symbolic in VERTEX, <> source;           #exit from the graph
param cap {EDGES} >= 0;                             #flow capacities

var Flow {(i,j) in EDGES} >= 0, <= cap[i,j];          #flows

maximize: sum {(source,j) in EDGES} Flow[source,j];    #objective

subject to {k in VERTEX diff {source,sink}}:          #conservation of flow
  sum {(i,k) in EDGES} Flow[i,k] = sum {(k,j) in EDGES} Flow[k,j];
\end{lstlisting}
\caption{AMPL specification for a general flow linear program. The vertices and edges as well as the corresponding flow capacities are provided in separate files. The flows of the edges are declared to be determined by the LP solver.\label{lp:flow}}
\end{figure}
The program starts with a definition of all sets, parameters and variables that appear in it. They are then used to define the objective and the constraints from the network flow problem, in particular the third one. The first two constraints are incorporated into the variable definition.  As one can see, AMPL
allows one to write down the problem description in a declarative way. It frees the user from engineering instance specific LPs while capturing the 
general properties of the problem class at hand. However, AMPL does not provide logically parameterized definitions for the arithmetic expressions and for the
index sets. RLPs, which we will introduce now, feature exactly this. 

A first important thing to notice is that AMPL mimics arithmetic notation in its syntax as much as possible. It operates on sets, intersections of sets and arithmetic expressions indexed with these sets. Our language for relational linear programming 
effectively replaces these constructs with logical predicates, clauses, and queries to define the three main parts of an RLP: the objective template, the constraints template, and a logical knowledge base. 
An RLP for the flow example is shown in Fig.~\ref{rlp:flow}. It directly codes the flow constraints,
concisely captures the essence of flow problems, and illustrates nicely that linear programming in general can be viewed as being highly relational
in nature. Let's now discuss this program line by line. 
\begin{figure}[t]
\begin{lstlisting}[language=ampl]
var flow/2;              #the flow along edges is determined by the solver
               
outflow(X) = sum {edge(X, Y)} flow(X, Y);                #outflow of nodes
inflow(Y)  = sum {edge(X, Y)} flow(X, Y);                #outflow of nodes

maximise: sum {source(X)} outflow(X);                     #objective

subject to {vertex(X), not source(X), not sink(X)}:  #conservation of flow
  outflow(X) - inflow(X) = 0;
subject to {edge(X, Y)}: cap(X, Y) - flow(X, Y) >= 0;      #capacity bound 
subject to {edge(X, Y)}: flow(X, Y) >= 0;               #no negative flows
\end{lstlisting}
\caption{Relational encoding the general flow LP. For details we refer to the main text.\label{rlp:flow}}
\end{figure}

Predicates define variables and parameters in the LP. In the flow example, {\tt flow/2} captures for example the flows between nodes. Sets that are explicitly defining domains in AMPL are discarded and parameter/variable domains are defined implicitly.
In contrast to logic, (ground) atoms can take any numeric value, not just true or false.
For instance the capacity between the nodes is captured by
{\tt cap/2}, and the specific capacity between node $\mathtt{f}$ and $\mathtt{t}$
could take the value $3.7$, i.e., {\tt cap(f, t) = 3.7}. 
Generally, atoms are parameters of the LP.  To declare that
they are values to be determined by the solver we follow AMPL's notation,
\begin{lstlisting}[language=ampl,frame=none,basicstyle=\footnotesize\ttfamily]
var flow/2;
\end{lstlisting}
%That also means that there will be no definitions for {\tt flow/2}  since it values are do be determined by the solver. In contrast,
The in- and outflows per node, {\tt inflow/2} and {\tt outflow/2},  are defined within the RLP. They are the sums of all flows into 
respectively out of a node. To do so, we use logically {\bf parameterized equations} or {\bf par-equations} in short. 
A par-equation is a finite-length expression of the form 
\begin{equation*}
\phi_1 \ = \ \phi_2\;,
\end{equation*}
where $\phi_1$ and $\phi_2$ are {\bf par-expressions} of the form
\begin{equation*}
\mathtt{sum} \{ \phi \} \ \psi_1 \  \mathtt{op}_1 \ \psi_2 \ \mathtt{op}_2 \ \ldots  \ \mathtt{op}_{n-1} \ \psi_n
\end{equation*}
of finite length. Here $\psi_i$ are numeric constants, atoms or {par-expressions}, and the $\mathtt{op}_j$ are arithmetic operators. 
The term 
$\mathtt{sum} \{ \phi \} $ --- which is optional --- essentially implements the AMPL aggregation {\bf sum} but now indexed over a logical query $\phi $. That is, the AMPL
indexing expression {\tt \{j in P\}} for the aggregation is turned into an indexing over all tuples in the answer set of the logical query. 
Essentially, one can think of this as calling the Prolog meta-predicate 
\begin{equation*}
\mathtt{setof\big(}\Var(\phi,\psi_1,\ldots,\psi_n),(\phi,\psi_1,\psi_2,\ldots,\psi_n) ,\mathtt{P\big)}
\end{equation*} 
treating the par-expression $\psi_1 \  \mathtt{op}_1 \ \psi_2 \ \mathtt{op}_2 \ \ldots  \ \mathtt{op}_{n-1} \ \psi_n$ as a conjunction $\psi_1,\psi_2,\ldots,\psi_n$.
This will produce the set {\tt P} of all substitutions of the variables $\Var(\phi,\psi_1,\ldots,\psi_n)$
%{\tt X}  
(with any duplicate removed) such that the query $\phi,\psi_1,\ldots,\psi_n$ is satisfied. In case, we are interested in multi-sets, i.e., to express counts, one may also use $\mathtt{findall}/3$. This could be expressed using $\mathtt{sum} \langle\phi\rangle $ instead of $\mathtt{sum} \{ \phi \}$ . The {\tt sum} aggregation and the involved par-expression is then evaluated over the resulting multidimensional index {\tt P}. If no {\tt sum} is provided, this will just be logical indexing for the evaluation of the  par-expression $\psi_1 \  \mathtt{op}_1 \ \psi_2 \ \mathtt{op}_2 \ \ldots  \ \mathtt{op}_{n-1} \ \psi_n$.
Finally, we note that all par-equalities are assumed implicitly to be all-quantified. That is, they may lead to several ground instances, in particular if there
are free variables in the logical query $\phi$.

With this at hand, we can define {\tt inflow/2} and {\tt outflow/2} as follows
\begin{lstlisting}[language=ampl,frame=none,basicstyle=\footnotesize\ttfamily,firstnumber=3]
outflow(X) = sum {edge(X, Y)} flow(X, Y);
inflow(Y)  = sum {edge(X, Y)} flow(X, Y);
\end{lstlisting}
Since {\tt Y} is bounded by the summation for {\tt outflow/1} and {\tt X} by the summation for {\tt inflow/1}, this says that there are
two equality expressions per node {\tt X} --- one for the outflow and one for the inflow --- summing over all flows of the out- respectively 
incoming edges of the node {\tt X}.
Indeed, {\tt edge/2} is not defined in the flow RLP. In such cases,
we assume it to be defined within a logical knowledge base LogKB (see below) represented as logic program.
%Generally, par-equations are implicitly assumed to be all-quantified. 
%Since {\tt Y} is bounded by the {\tt sum} statement, we get one equality statement per node. 
%, and the statement
%{\tt cap(f, t) = 3.7} can be viewed as short-hand for {\tt cap(f, t, 3.7)}. In turn,  by overloading notation,
% {\tt cap/2} can be queried in a logical way. The query $\imp\mathtt{cap(t,f)}$ is turned into $\imp\mathtt{cap(t,f,\_)}$m where $\_$ denotes
% an anonymous variable. 
 %With this in mind, the two par-expressions for {\tt inflow/2} and {\tt outflow/2} say that the in- and outflows are the sums of all flows into respectively out of a node. 
%Since the only freeTo avoid multiple, conflicting value assignments in par-expressions, we assume that all free variables that do not appear in 
%both $\phi_1$ and $\phi_2$ are bounded by some aggregation statement. 

Similarly, we can now define the objective\footnote{For the sake of simplicity, we here assume that there are exactly one source and one sink vertex. If one wants to enforce this, we could simply add this as a logical constraint within the selection query, resulting in an empty objective if there are several source and sink notes. In AMPL, one would use additional {\tt check}  statements to express such restrictions, which cannot be expressed using simple inequalities.} using a par-expression:
\begin{lstlisting}[language=ampl,frame=none,basicstyle=\footnotesize\ttfamily,firstnumber=6]
maximise: sum {source(X)} outflow(X);
\end{lstlisting}
This says, that we want to maximize the outflows for all source nodes.  Note that we assume that all variables appear in 
the {\tt sum} statement to avoid producing multiple and conflicting objectives. 

Next, we define the constraints. Again we can use par-equations or actually par-inequalities. {\bf Par-inequalities}
are like par-equations where we use inequalities instead of equalities. For the flow example they are:
\begin{lstlisting}[language=ampl,frame=none,basicstyle=\footnotesize\ttfamily,firstnumber=8]
subject to {node(X), not source(X), not sink(X)}: 
  outflow(X) - inflow(X) = 0;
subject to {edge(X, Y)}: cap(X, Y) - flow(X, Y) >= 0;
subject to {edge(X, Y)}: flow(X, Y) >= 0;
\end{lstlisting}
This again illustrates the power of RLPs.  Since indexing expressions are logical queries, we can naturally express things that either look cumbersome in AMPL or even go beyond its capabilities. 
For instance the concept of {\bf internal edge}, which is explicitly represented by a lengthy expression with sets intersections and differences in AMPL, is implicitly represented by a combination of the indexing query {\tt node(X), not source(X), not sink(X)} and par-equations {\tt in}-/{\tt outflow}.

\begin{figure}[t]
\begin{center}
\begin{tikzpicture}[x=2.0cm,y=0.8cm]
\GraphInit[vstyle=Classic]
\begin{scope}[VertexStyle/.append style = {color=gray, minimum size = 8pt, inner sep = 1pt}]
\Vertex[x=2,y=1.5,Lpos=90,L=$a$]{a}
\Vertex[x=2,y=-1.5,Lpos=-90,L=$b$]{b}
\Vertex[x=4,y=1.5,Lpos=90,L=$c$]{c}
\Vertex[x=4,y=-1.5,Lpos=-90,L=$d$]{d}
\end{scope}

\begin{scope}[VertexStyle/.append style = {color=red,minimum size = 8pt, inner sep = 1pt}]
\Vertex[Lpos=180,L=$s$]{s}
\Vertex[x=6,y=0,Lpos=0,L=$t$]{t}

\tikzset{EdgeStyle/.style={postaction=decorate,decoration={markings,mark=at position 0.5 with {\arrow{triangle 60}}}},
MyLabel/.style={
   auto=left,sloped,color=blue,
   fill=none,
   outer sep=0.2ex}}

\Edge[style={bend left = 10},label={$4$},labelstyle={MyLabel}](s)(a)
\Edge[style={bend right = 10},label={$2$},labelstyle={MyLabel}](s)(b)
\Edge[style={bend left = 10},label={$3$},labelstyle={MyLabel}](a)(c)
\Edge[style={bend right = 25},label={$2$},labelstyle={MyLabel}](b)(c)
\Edge[style={bend right = 10},label={$3$},labelstyle={MyLabel}](b)(d)
\Edge[style={bend right = 25},label={$1$},labelstyle={MyLabel}](c)(b)
\Edge[style={bend left = 10},label={$2$},labelstyle={MyLabel}](c)(t)
\Edge[style={bend right = 10},label={$4$},labelstyle={MyLabel}](d)(t)

\end{scope}
\end{tikzpicture}
\end{center}
\caption{A graph for a particular instance of the flow problem. The node $s$ denotes the source, and $t$ the sink. The numbers associated with
the edges are the flow capacities. \label{fig:flowinst} }
\end{figure}
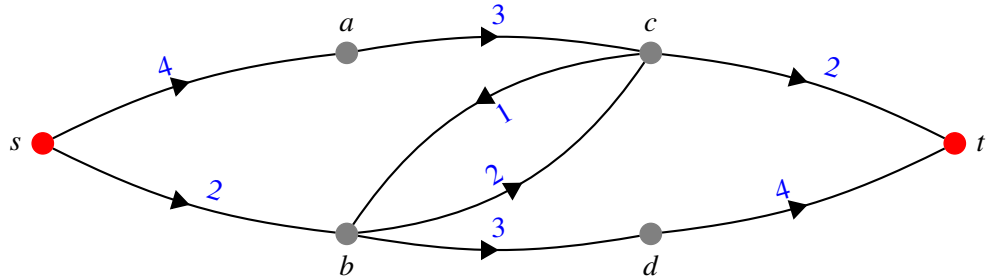
Finally, as already mentioned, everything not defined in the RLP %parameter atoms %i.e. grounding of parameter atoms, 
is assumed to be defined in an external logical knowledge base LogKB. It includes groundings of parameter atoms and definitions of intensional predicates (clauses). 
%Groundings of predicates used in an LP template (parameters in AMPL terminology) are defined in the logical knowledge base (LogKB), 
We here use Prolog but assume that each query from the RLP produces a finite set of answers, i.e., ground substitutions of its logical variables. 
For instance the LogKB for the instance of the flow problem shown in Fig.~\ref{fig:flowinst} can be expressed as follows: % like this:
\begin{lstlisting}[language=ampl,frame=none,basicstyle=\footnotesize\ttfamily,numbers=none,backgroundcolor=\color{blue!10}]
cap(s,a) = 4.  cap(s,b) = 2.  cap(a,c) = 3.  cap(b,c) = 2.  
cap(b,d) = 3.  cap(c,b) = 1.  cap(b,t) = 2.  cap(d,t) = 4.

edge(X,Y) :- cap(X,Y).

vertex(X) :- edge(X,_).
vertex(X) :- edge(_,X).

source(s).
sink(t).
\end{lstlisting}
where {\tt cap(s, a) = 4} is short-hand notation for {\tt cap(s,a,4)} and {\tt cap(X, Y)} for {\tt cap(X,Y,\_)}, where we use an anonymized variable `{\tt \_}'. 
Predicates {\tt edge} and {\tt vertex} are defined intensionally using logical clauses. 
By default intensional predicates take value 1 or 0 (corresponding to true and false) with the RLP. 
Values of intensional predicates are computed before grounding the RLP. 

Indeed, querying for a vertex would results in a multi-set, since the definition of {\tt vertex/1}
tests for a vertex as the source and the sink of an edge. However, recall that a logical indexing statement {\tt \{...\}} removes any duplicate first. 
Consequently, we could have used directly the capacity statements {\tt cap/2} to define edges and vertices. 
\begin{lstlisting}[language=ampl,frame=none,basicstyle=\footnotesize\ttfamily,firstnumber=8]
subject to {cap(X,_), not source(X), not sink(X)}: 
  outflow(X) - inflow(X) = 0;
subject to {cap(X, Y)}: cap(X, Y) - flow(X, Y) >= 0;
subject to {cap(X, Y)}: flow(X, Y) >= 0;
\end{lstlisting}
This works, since the logical indexing statement {\tt \{...\}} produces a set variable bindings so that the 
query {\tt cap(X, \_)} to a knowledge base will return {\tt a} only once (note: we do not allow multiset indexing $\langle\ \ldots \rangle $  here, as it only generates redundant constraints). And, due to the use of the anonymous variable {\tt \_}, 
its values of the corresponding variable are not included in the query results. 

In any case,
using logical clauses within relational linear programs is quite powerful. For example, a passive smoking predicate for the collective classification example from
the introduction can be defined in the following way:
\begin{lstlisting}[language=ampl,frame=none,basicstyle=\footnotesize\ttfamily,numbers=none,backgroundcolor=\color{blue!10}]
attribute(X, passive) :- friends(X, Y), attribute(Y, smokes).
\end{lstlisting}
It can then be used within the RLP. 
% LP template like any other parameter predicate. Quite often a counting functionality is also required. For instance, the user might
%want to have a predicate representing the number of smoking friends a person has. This can be achieved by adding the keyword {\tt count} to the beginning 
%of a query.
%\begin{minted}[fontsize = \small,firstnumber=8]{rlp}
%attribute(X, passive) :- count, friends(X, Y), attribute(Y, smokes).
%\end{minted}
%The predicates above takes a value equal to the number of smoking friends of a person {\tt X}. In RLP logical rules support conjunctions, disjunctions and negations but don't support parenthesised expressions, recursion or advanced constructs like lists. This restriction allows us to apply efficient grounding techniques described later.
Or, as another example, consider the compact representation of MLNs for instance for MAP LP inference using RLPs. 
The following logKB encodes compactly the smokers MLN \cite{richardson2006markov}
%In many real world situation a lot of ground predicates take the same value and typing them all is very tedious. To mitigate this problem we introduce typed variables in the LogKB, inspired by a similar construct in MLNs. For example, consider a MAP LP \cite{wainwright2008graphical} for the smokers MLN \cite{richardson2006markov}.
 This MLN has the following weighted clause: 
 %\begin{equation*}
 $0.75 \ \mathtt{smokes(X) \imp cancer(X)}\;,$ 
 %\end{equation*}
 which means that smoking leads to cancer, and our belief in this fact is proportional to $0.75$. In the MAP LP we want to have a predicate with value $0.75$ for every instance of a rule which is true, and a predicate with value 0 for every instance which is false. Instead of writing this down manually, we can use:
%\begin{minted}[linenos,
%	      fontsize = \small,
%               numbersep=5pt]{rlp}
%dom Person = [anna, bob]
%dom Val = [0, 1]
%
%def smokes(Person)
%def cancer(Person)
%def w(_, _, Val, Val)
%
%w(smokes(X), cancer(X), 1, 0) = 0
%w(smokes(X), cancer(X), V1, V2) = 0.75
%\end{minted}
\begin{lstlisting}[language=ampl,frame=none,basicstyle=\footnotesize\ttfamily,numbers=none,backgroundcolor=\color{blue!10}]
person(anna).   person(bob). ...

value(0).  value(1).

w(smokes(X), cancer(X), 1, 0) = 0 :- person(X).
w(smokes(X), cancer(X), V1, V2) = 0.75 :- 
                           person(X), value(V1), value(V2).
\end{lstlisting}
Please keep our short-hand notation in mind: {\tt w(smokes(X), cancer(X), 1, 0) = 0} stands for {\tt w(smokes(X), cancer(X), 1, 0, 0)}. 
This LogKB will generate the following ground atoms for the weights:
\begin{lstlisting}[language=ampl,frame=none,basicstyle=\footnotesize\ttfamily,numbers=none,backgroundcolor=\color{blue!10}]
w(smokes(anna), cancer(anna), 1, 0) = 0
w(smokes(anna), cancer(anna), 0, 0) = 0.75
...
w(smokes(bob), cancer(bob), 1, 1) = 0.75
\end{lstlisting}
To summarize, a {\bf relational linear program} (RLP) consists  of 
\begin{itemize} 
\item variable declarations of predicates, 
\item one par-equality to define the objective, and 
\item several par-(in)equalities to define the constraints. 
\end{itemize}
Everything not explicitly defined is assumed to be a parameter defined in an external LogKB. 
We now show that any RLP induces a valid ground LP.
\begin{theorem}\label{th:semantics}
An RLP together with a LogKB (such that the logical queries in the RLP have finite answer-sets) induces a ground LP.
\end{theorem}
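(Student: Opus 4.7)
My plan is to proceed constructively: from an RLP $R$ and a LogKB $K$ I will exhibit the ground LP $L=(\+A,\bv,\cv)$ that they induce. The argument has three phases: (i) evaluate the LogKB side to obtain numeric values for all parameter atoms; (ii) ground every par-(in)equality by enumerating the answer sets of the logical queries that index it; and (iii) verify that each ground instance is an affine (in)equality in the decision atoms, so that the aggregate is a linear program in the sense of Section~\ref{lp}.

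First I would split the atoms occurring in $R$ into \emph{decision atoms}, namely those whose predicate was declared via \texttt{var}, and \emph{parameter atoms}, all others. Parameter atoms receive their numeric value from the least Herbrand model $\lhm(K)$, under the convention from the preceding text that a fact of the form $p(\bar t)=v$ assigns the value $v$ and that a purely relational intensional atom takes value $1$ if it is in $\lhm(K)$ and $0$ otherwise. The hypothesis that every query appearing in $R$ has a finite answer set against $K$ guarantees that this look-up produces only finitely many numeric constants.

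Second I would describe the grounding procedure. For each par-(in)equation $\phi_1\ \mathrm{rel}\ \phi_2$ of $R$ with outer indexing query $\chi$, I enumerate the (finite) set of answer substitutions $\Theta=\{\theta_1,\dots,\theta_k\}$ to $\chi$ against $K$, and emit one ground (in)equality per $\theta_j$. Each internal occurrence of $\mathtt{sum}\{\phi\}\,\psi$ is unfolded by querying $K$ for the answer set of $\phi$ and replacing the $\mathtt{sum}$ by an explicit finite addition of the corresponding instances of $\psi$. By structural induction on the nesting depth of par-expressions, and using the finite-answer-set hypothesis at each level, this rewriting terminates and produces, for every objective and every constraint, a finite collection of ground arithmetic expressions involving only decision atoms and numeric constants.

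The key step is to show that each ground expression so obtained is \emph{linear} in the decision atoms. I would do this by structural induction: numeric constants are affine; parameter atoms become numeric constants via $\lhm(K)$ and are therefore affine; a decision atom is an affine monomial of degree one; and a finite sum of affine expressions is affine. Collecting the coefficients of each decision atom across all ground instances yields matrices $\+A,\+G$ and vectors $\bv,\hv,\cv$ of finite size, so the output is an LP in the canonical form of~\eqref{eq:dualform}. The main technical obstacle is precisely this linearity check, because the grammar of par-expressions in principle allows arbitrary arithmetic operators $\mathrm{op}_j$ between sub-expressions, and nothing purely syntactic prevents a product of two decision atoms. I would address this by making explicit the well-formedness restriction implicit in the preceding section --- that in any RLP the par-expressions have the affine shape $\sum_i c_i(\bar t)\, v_i(\bar t)+d(\bar t)$ with $c_i$ and $d$ built only from parameter atoms and the $v_i$ decision atoms --- so that the induction goes through and the resulting ground system is a bona fide linear program.
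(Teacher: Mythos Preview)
Your proposal is correct and follows essentially the same constructive strategy as the paper: ground each par-(in)equality by evaluating the indexing queries against the LogKB and appeal to the finite-answer-set hypothesis for termination. Two minor differences are worth noting. First, the paper's organizing device is a ``prenex normal form'' step that hoists every \texttt{sum} to the front of a par-expression and then grounds inside-out, whereas you proceed directly by structural induction on the nesting depth of \texttt{sum}s; these are interchangeable. Second, you go further than the paper in explicitly verifying that every ground expression is \emph{affine} in the decision atoms and in flagging the syntactic gap that the grammar of par-expressions does not by itself exclude a product of two decision atoms. The paper's proof simply asserts that the result is ``a valid ground LP'' without addressing linearity, so your added well-formedness restriction is a genuine (and welcome) tightening rather than a deviation.
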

\begin{proof}
The intuition is to treat par-(in)equality as a logical rules, treating the arithmetic operators as conjunctions, the
(in)equalities as the $\imp$, and {\tt sum} statements as meta-predicates. Then the finiteness follows from
the assumption of finite answer-sets for logical queries to LogKB. Finally, each ground clause can be turned back into
an arithmetic expression resp. (in)equality. 
More specifically, 
a predicate is grounded either to numbers or to LP variables. Since par-expressions
are of finite length, we can turn them into a kind of prenex normal form, that ist, we can write them as strings of 
$\mathtt{sum} \{ \phi \}$ statements followed by a {\tt sum}-free part where bracket expressions are correspondingly simplified.  
Now, we can ground a par-expression inside-out with respect to the {\tt sum} statements. Each of these groundings is finite 
due to the assumption that the queries to the LogKB have finite sets of answers. In turn, since the objective assumes that all 
variables are bounded by the  {\tt sum} statements in the prenex normal form, only a single ground sum over
numbers and LP variables is produced as objective. For par-(in)equalities encoding constraints both sides of
the (in)equality are par-expressions. Hence, using the same argument as for the objective, they produce at most a finite number of
ground (in)equalities. Taking everything together, an RLP together with a finite LogKB always induces a valid ground LP.
\end{proof}
For illustration reconsider the flow instance in Fig.~\ref{fig:flowinst}. It induces the following 
ground LP for the relational flow LP in Fig.~\ref{rlp:flow} (for the sake of readability, only two groundings are shown for each constraint for compactness).
\begin{lstlisting}[language=ampl,frame=none,basicstyle=\footnotesize\ttfamily]
maximize: flow(s,a) + flow(s,b);

subject to: flow(b,c)+flow(b,d)-flow(s,b)-flow(c,b)=0;
subject to: flow(a,c) - flow(s, a) = 0
...
subject to: 3 - flow(a, c) >= 0;
subject to: 2 - flow(c, t) >= 0;
...
subject to: flow(c, t) >= 0;
subject to: flow(b, c) >= 0;
...
\end{lstlisting}
\begin{algorithm}[t]
\KwIn{RLP together with a LogKB}
\KwOut{Ground LP $G$ consisting of ground (AMPL) statements}
Set $G$ to the empty LP\;
``Flatten'' par-(in)equalities and the objective into prenex normal form by inlining aggregates and simplifying brackets\;
	\For{each par-(in)equality {\bf and} the objective}{
		\For{each sum-aggreation and each separate atom} {
			Query the LogKB in order to obtain a grounding or a set of groundings if we are dealing with a constraint which involves an indexing expression\;
		}
		Concatenate results of queries evaluation for each grounding to form a ground (in)equality resp.~objective\;
		Add the ground (in)equality resp.~objective to $G$\;
	}
\Return{$G$}	
\caption{Grounding RLPs. The resulting ground LP $G$ can be solved using any LP solver after transforming into the solver's input form. This can of course be automated.\label{gag}}
\end{algorithm}

%\subsection{Grounding}
But how do we compute this induced ground LP, which will then become an input to an LP solver? 
That is, given a RLP  and a LogKB how do we effectively expand all the sums and substitute all the parameters with numbers?  
The proof of Theorem~\ref{th:semantics} essentially tells us how to do this. We treat all par-(in)equalities and par-expressions
as logical rules (after turning them into prenex normal form and simplifying them). Then any Prolog engine can be used to
compute all groundings. A simple version of this is summarized in Alg.~\ref{gag}. %, any Prolog engine could be used. 

In our experiments, however, we have used a more efficient approach similar to the one 
introduced in Tuffy~\cite{niu2011tuffy}. The idea is to use a
relational database management system (RDBMS) for bottom-up grounding, first populating the ground predicates into a DB and then translating logical formulas into SQL queries. Intuitively, this allows to exploit RDBMS optimizers, and thus significantly speed up of the grounding phase. 
%Compared to naive able to achieve a speed up from 7 hours to 2 minutes on some tasks.
%\begin{enumerate}
%\item All the ground predicates from LogKB are populated into an RDBMS
%\item If LogKB contains intensional predicates they are precomputed and populated into an RDBMS as well
%\item The objective and the constraints are "flattened" by substituting aggregate references with aggregate bodies and simplifying bracketed expressions
%\item At this point objective and constraints consist of sums of sum expressions. Each such expression is converted into an SQL query to the db. 
%\item PostgreSQL is used in the current implementation, which allows to perform arithmetic computations and string concatenations inside an SQL request, so we are able to get sums of LP variables with corresponding coefficients directly from SQL requests. The only post processing needed is concatenation of these strings.
%\end{enumerate}
We essentially follow the same strategy for grounding RLPs. PostgreSQL is used in the current implementation, which allows to perform arithmetic computations and string concatenations inside an SQL query, so we are able to get sums of LP variables with corresponding coefficients directly from a query. As a results, the only post processing needed is concatenation of these strings. Our grounding implementation takes comparable time to Tuffy on an MLN with comparable number of ground predicates to be generated, which is a state-of-the-art performance on this task at the moment.

To summarize, {\bf relational linear programming} works as follows: 
\begin{enumerate}
\item Specify an RLP $R$.
\item Given a LogKB, ground $R$ into an LP $L$ (Alg.~\ref{gag}).
\item Solve $L$ using any LP solver.
\end{enumerate}
In fact, this novel linear programming approach  --- as we will demonstrate later --- already ease considerably the specification of 
whole families of linear programs. However, we can do considerably better. As we will show next, we can efficiently detect and exploit symmetries 
within the induced LPs and in turn speed up solving the RLP often considerably.

%!TEX root = main.tex
\section{Exploiting Symmetries for Reducing the Dimension of LPs}\label{sec:lp}
%Loopy Belief Propagation was the first approximate inference method to receive attention from the lifted inference community. 

As we have already mentioned in the introduction, one of the features of many relational models is that 
they can produce model instances with a lot of symmetries. These symmetries in turn can be exploited to
perform inference at a ``lifted'' level, i.e., at the level of groups of variables. For probabilistic relational models, this lifted inference 
can yield dramatic speed-ups, since one reasons about the groups of indistinguishable variables as a whole, instead of treating them individually. 

Triggered by this success, we will now show that linear programming is liftable, too. 

\subsection{Detection Symmetries using Color-Passing}
%probabilistic logical models is that inference is often
%possible at the lifted (group) level. 
One way to devise a lifted inference approach is the following. One starts with a standard inference algorithm and introduces
some notion of indistinguishability among the variables in the model (instance) at hand. 
For example, we can say that two variables $X$ and $Y$ in a linear program are indistinguishable, if 
there exist a permutation of all variables, which exchanges $X$ and $Y$, yet still yields back the same model in terms of the solutions. 
Then, given a particular model instance, one detects, which variables are exchangeable in that model instance. The standard inference algorithm
is modified in such a way that it can deal with groups of indistinguishable variables as a whole, instead of treating them individually. 
This approach was for instance followed to devise a lifted version of belief propagation~\cite{singla08aaai,kersting09uai,ahmadi2013mlj}, a message-passing algorithm for approximate inference in Markov 
random fields (MRFs), which we will not briefly sketch in order to prepare the stage for lifted linear programming. In doing so, 
we will omit many details, since they are not important for developing lifted linear programming. 

\begin{algorithm}[t]
%\linesnumbered
%\SetAlgoLined
\SetKwFunction{newColor}{newColor}
\KwIn{A graph $G = (V,E)$, an initial coloring function $\lambda_0: V\cap E \rightarrow \mathbb{N}$}
\KwOut{A partition $\mathcal{U}=\{U_1,...,U_k\}$ of $V$}

Initialize $i\leftarrow 0$, $\mathcal{U}_0 = \{V\}$

\Repeat{$\mathcal{U}_{i-1} = \mathcal{U}_i$}{
  \ForEach{$v\in V$}{
    $c\leftarrow \lambda_i(v)$
    
    \ForEach{$u\in Nb_v$}{
      $c \leftarrow \left<c \cup \left( \lambda_i(u), \lambda_0(\{u,v\})  \right)\right>$  
    }
    $\lambda_{i+1}(u) \leftarrow \mathrm{hash}(c)$
 }
 $\mathcal{U}_{i+1} \leftarrow \{ \{ v\in V | \lambda_{i+1} = k \}\}$

$i \leftarrow i + 1$
}
\Return{ $\mathcal{U}_i$}\;
\caption{Color-Passing\label{alg:colpass}}
\end{algorithm}  

Belief propagation approximately computes the single-variable marginal probabilities $P(X_i)$ in an MRF encoding the joint distribution 
over the random variables $X_1$, $X_2,\ldots, X_n$.  It does so by passing messages within a graphical represention of the MRF.  

%The first approach following this ``lifted'' version of loopy BP, called Lifted First-Order Belief Propagation, was introduced by 
%Singla and Domingos~\citeyear{singla08aaai}. 
The main idea to lift belief propagation is to simulate it keeping track of which $X_i$s and clauses send identical messages. These elements of the model can then be merged into groups, whose members are indistinguishable in terms of belief propagation. After grouping elements together into 
%In turn, these groups %supervariables and superfactors 
%act as the elements of 
a potentially smaller (lifted) MRF, a modified message-passing computes the same beliefs as standard belief propagation 
on the original MRF. %In \cite{kersting09uai}, this algorithm was extended, and the lifted network construction process was given a novel interpretation: 

To identify indistinguishable elements, one first assigns colors to the elements of the MRF. %, based on the potentials for factors and on the evidence case for variables. 
Then, one performs message-passing, but replaces the belief propagation messages --- the computation of which is the most time consuming step in belief propagation, namely a sum-product operation --- by these colors. That is, instead of the sum-product update rule, one uses a less computationally intensive sort-and-hash update for colors. After every iteration, one keeps track of the partition of the network, ${\cal U} = \{U_1,...,U_k\}$, induced by nodes the same colors. Unlike standard belief propagation, this color-passing procedure is guaranteed to converge (that is, the partition will stop getting finer) in at most $|V|$ iterations. I.e., at worst, one may end up with the trivial partition. This color-passing algorithm is outlined in Alg.~\ref{alg:colpass}.

Lifted linear programming as introduced next is quite similar to lifted belief propagation. In fact, it also uses color-passing for detection the symmetries.
%symmetry detection. 
However, there are remarkable differences:
\begin{enumerate}
\item
First, lifted belief propagation applies only to approximate probabilistic inference. As it exploits redundancies within belief propagation, it will produce the same solution as belief propagation. However, examples can be found, where the true and exact solution does not exhibit symmetry, while the approximation does. In contrast, lifted linear programming is sound --- an exact solution to an LP can be recovered from any lifted solution.

\item Second, as an intermediate step lifted belief propagation generates a symmetry-compressed model. This lifted MRF which is no longer an MRF since there 
are for instance multi-edges.To accommodate for that, the message equations of lifted belief propagation are modified and, as a result, lifted message-passing cannot be done using standard belief propagation implementations. As we will show, this is not the case for lifted linear programming. The symmetry-compressed LP, or lifted LP for short, is still a LP and can be solved using any LP solver. This is a significant advantage over lifted belief propagation, as we can take full advantage of 
state-of-the-art LP solver technology.
\end{enumerate}
Both points together suggest to view lifted linear programming as  reducing the dimension of the LP. So, how do we do this?

%\begin{algorithm}[t]
%\label{alg:col2pass}
%%\linesnumbered
%%\SetAlgoLined
%\SetKwFunction{newColor}{newColor}
%\KwIn{A pairwise MRF}
%\KwOut{A set of beliefs $b_i(\cdot)$ estimating the marginal of $X_i$}
%
%Initialize $k\leftarrow 0$, $\mu^k_{X_i\rightarrow X_j}(X_j) = 1$
%
%\Repeat{convergence}{
%  $i \leftarrow i + 1$  
%  \ForEach{$X_i$}{
%    \ForEach{$X_j \in Nb(X_i)$}{
%      compute message from $X_i$ to $X_j$  
%    } }}
%\Return{$b_i(X_i) \propto \varphi_i(X_i)\prod_{X_j\in Nb(X_i)}\mu^k_{X_j\rightarrow X_i }(X_i)$}\;
%\caption{Loopy Belief Propagation (pairwise)}
%\end{algorithm}

%{\todo Explain the unknown symbols in the algorithm and modify it for the story}

\ignore{ 
What we will show now is that LBP can be seen as the following sequence of steps. 
\begin{enumerate}
\item Given the graph of graphical model, compute the coarsest equitable partition.  
\item Then retract the original graph to the corresponding quotient graph.
\item Run a modified BP on the quotient graph. 
\end{enumerate}
We start with showing that ${\cal U}$ is an equitable partition, which is defined as follows; 
\begin{definition}
A partition ${\cal P} = \{P_1,...,P_p\}$ over an edge and vertex-colored graph is equitable if for every pair of vertices $u,u'$ in the same class $P_m$, and for every class $P_n$ ($m = n$ is allowed as well), there are same number of edges with color $c$ going from $u$ to $P_n$ as there are from $u'$ to $P_n$. In other words, \[| \{\ v \in P_n\ |\ \colorv(\{u,v\}) = c\ \}| = |\{\ v' \in P_n\ |\ \colorv(\{u',v'\}) = c\ \}|\;.\] Moreover, the partition has to respect the vertex colors, i.e., $i,j \in P_n \implies  \colorv(u) = \colorv(u')$.   
\end{definition}
In the absense of edge colors, this definition coincides with the more common notion of equitability, namely that every element of class $P_m$ has the same number of neigbors in class $P_n\;.$
 
We will designate the quantity $|\{\ v \in P_n\ |\ \colorv(\{u,v\}) = c\ \}|$ as $\deg_c(u,P_n)$. Due to the above definition, when $\cal P$ is equitable, $\deg_c(u,P_n)$ is the same for any $u$ in $P_m$, hence we will write $\deg_c(P_m,P_n)\;.$ Note that in general, $\deg_c(P_m,P_n)\neq \deg_c(P_n,P_m)$ and $\deg_c(P_m,P_m)$ could be non-zero.

We now come back to $\cal U\;.$ To see why ${\cal U}$ is equitable, consider the stopping criterion of Algorithm~\ref{alg:colpass}, that is, the partition is not refined any more. We know that the starting color of a node $u$ in iteration $i+1$ of Algorithm~\ref{alg:colpass} is determined uniquely by its color signature: the color of $u$, and, for every neighbor $v$, the color of $v$ and the color of $\{v,u\}$ of iteration $i$. If the partition did not change after iteration $i$, then in iteration $i$ every two nodes in the same group had the exact same color signature (if it weren't the case, we would have refined the partition). However, as the color signature for each node is uniquely defined by the number of neighbors it has of each color (i.e. of every other group), every two nodes in a group have exactly the same number of neighbors in every other group. This implies that after termination of Algorithm \ref{alg:colpass},  ${\cal U}$ is equitable. This essentially proves the following proposition.

\begin{proposition} Runing color passing on a graph $G = (V,E)$ yields an equitable partition of $G$. Since its starts with 
all nodes in the same color class, it computes actually the coarsest equitable partition of $G$.
\end{proposition}

In fact, the relationship between equitable partitions and color-passing has been well-known in graph theory \cite{Ramana1994} {\todo theorems of the birkhoff type}. The colors computed by color-passing in every iteration correspond to the so-called iterated degree (or color for the case of colored graphs) sequences. 

The existence of a non-trivial equitable partition of a graph $G$ implies 
that $G$ is subject to a relaxed form of symmetry, namely that of a fractional automorphism.  
The precise definition of a fractional automorphism is as follows \cite{Ramana1994}.
\begin{definition}
Let $\bf M$ be an $n\times n$ real symmetric matrix, such as the (colored, weighted) adjacency matrix of a graph. A fractional automorphism of $\bf M$ is a symmetric doubly stochastic (meaning that the entries are non-negative, and every row and column sum to one) matrix $\bf X$ that commutes with $\bf M$, i.e. \[\bf MX = XM.\]
\end{definition}

A surprising and very useful fact, which plays a central role in this work, is that fractional automorphisms and equitable partitions are in a sense equivalent notions. More formally, we have the following claim. 

\begin{theorem}
\label{thm:fracauto}
Let $G$ be a graph. Then:
\begin{itemize}
\item[i)] if ${\cal U}$ is an equitable partition of $G,$ then the matrix $X^{\cal U},$ having entries 
\begin{equation}
\label{eq:flatmatrix}
X^{\cal U}_{ij} =
\begin{cases}
 1/|U| & \text{ if both vertices } i,\ j \text{ are in the same } U \in {\cal U},\\
 0 & \text{ otherwise,}
\end{cases}
\end{equation}

is a fractional automorphism of $G.$  
\item[ii)] if $X$ is a fractional automorphism of $G,$ then the partition ${\cal U}^X,$ where vertices $i$ and $j$ belong to the same $U\in {\cal U}^X$ if and only if at least one of $X_{ij}$ and $X_{ji}$ is greater than 0, is an equitable partition of $G$.
\end{itemize}
\end{theorem}
\begin{proof}
See ramana or grohe
\end{proof}
In the following, part $i)$ of the above Theorem will be of particular interest to us. We will shortly show how we can use color-passing to construct equitable partitions of linear programs. Encoding these equitable partitions as fractional automorphisms will provides us with insights into the geometrical aspects of lifting and will be an essential tool for proving its soundess.

Before we continue, we need to make one observation. Note that any graph partition (equitable or not) can be turned into a doubly stochastic matrix using \eqref{eq:flatmatrix}. However, keep in mind that the resulting matrix will not be a fractional automorphism unless the partition is equitable. In any case, partition matrices have a useful property that will later on  allow us to reduce the number of constraints and variables of a linear program. Namely, 
%the A practical characterization of the relationship between equitable partitions and fractional automorphisms comes from C.D. Godsil. That is,
\begin{proposition}\label{prop:godsil}
Let $X^{\cal U}$ be the doubly stochastic matrix of some partition ${\cal U}$ according to \eqref{eq:flatmatrix}. Then $X^{\cal U} = \widetilde{B}^{\cal U}(\widetilde{B}^{\cal U})^T$, with  
\begin{equation}
\label{eq:charmatrix}
{\widetilde{\bf B}}^{\cal U}_{im} = 
 \begin{cases}
   \frac{1}{\sqrt{|U_m|}} & \text{ if vertex } i \text{ belongs to part } U_m,\\
   0       & \text{ otherwise. }
  \end{cases}
\end{equation}
\end{proposition}
\begin{proof}
See~\cite{god97}.
\end{proof}

Finally, we mention the fact that given a graph $G$ and an equitable partition of its vertices ${\cal U} = \{U_1,...,U_k\}$, one can compute a directed quotient (multi-)graph $G/{\cal U}$ having a vertex set $V(G/{\cal U}) = \{ 1,...,k\}$ and a multiedge set $E(G/{\cal U}) = \{(m,n):\deg(P_m,P_n)\},$ meaning that the edge count associated with $(m,n)$ is $\deg(P_m,P_n).$ Algebraically, if $A(G)$ is the adjacency matrix of $G$ and $B^{\cal U}$ is the characteristic matrix of $\cal U$ with $B^{\cal U}_{im} = 1$ if $i\in U_k$ and $0$ otherwise, then $A(G/{\cal U}) = (B^{\cal U})^T A(G) B^{\cal U}.$ Here $A(G/{\cal U})$ is the $k \times k$ weighted adjacency matrix of $G/{\cal U}.$ Note that $B^{\cal U}$ is just the unnormalized $\widetilde{B}^{\cal U}$ from \eqref{eq:charmatrix}. In what follows, we will see that for practical purposes this normalization can safely be ignored, hence we may use $\widetilde{B}^{\cal U}$ and $B^{\cal U}$ interchangably.   
}
%!TEX root = main.tex

\subsection{Equitable Partitions and Fractional Automorphisms}
\label{opt}
\label{sec:fracLP}

\begin{figure*}[t]
\centering
\includegraphics[width=0.7\textwidth]{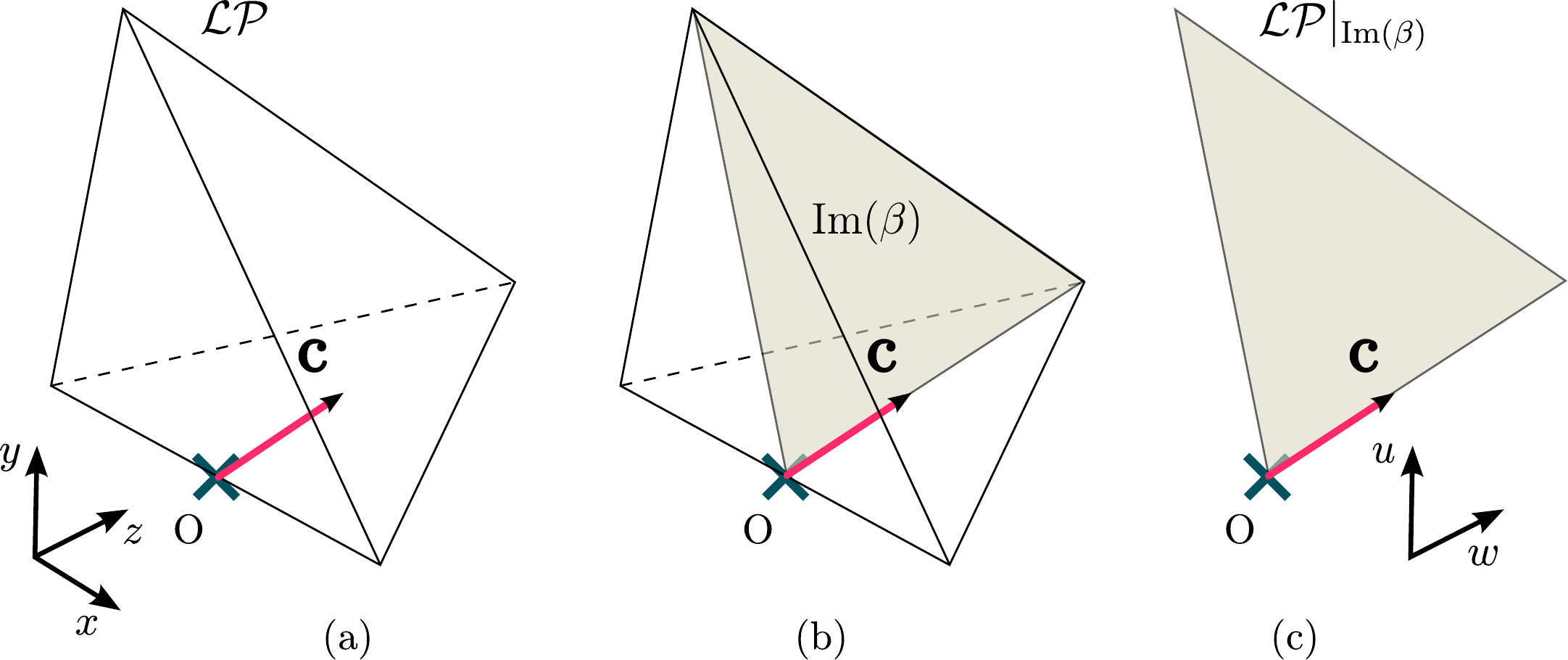}
\caption{Using symmetry to speed up linear programming: (a) the feasible region of $\lp$ and the objective vector (in pink); (b) the fixed space of $\aut(\lp)$ is identified (grey); (c) the feasible region is restricted to its intersection with the fixed space.
\label{fig:polySym}}
\end{figure*}

To develop lifted linear programming, we proceed as follows: first, we introduce the notion of {\bf equitable partitions} and show how they connect color-passing. To make use of equitable partitions in linear programs, we need to bridge the combinatorial world of partitions with the algebraic world of linear inequalities. We do so by introducing the notion of {\bf fractional automorphisms}, which serve as a matrix representation of partitions. Finally, we show that if an LP admits an equitable partition, then an optimal solution of the LP can be found in the span of the corresponding fractional automorphism. This essentially defines our lifting: by restricting the feasible region of the LP to the span of the fractional automorphism, we reduce the dimension of the LP to the rank of the fractional automorphsm (geometric intuition is given in Figure~\ref{fig:polySym}). As we will see, this results in an LP with fewer variables (to be precise, we have the same number of variables as the number of classes in the equitable partition). 

Now, let us make some necessary remarks on the nature of the partition $\cal U$ returned by Alg.~\ref{alg:colpass}. Suppose for now, $\lambda_0(e)=1$ for all $e\in E$, i.e. the graph is only vertex-colored. Observe that according to line $8$, nodes $v$ and $v^\prime$ receive different colors in the $i$'th iteration if the multisets of the colors of their neighbors are different. That is, in order to be distinguished by Alg.~\ref{alg:colpass}, $v$ and $v^\prime$ must have a different number of neighbors of the same color at some iteration $i$. 

Consequently, as the algorithm terminates when the partition no longer refines, we conclude that the following holds: Alg.~\ref{alg:colpass} partitions a graph $G$ in such a way that every two nodes nodes in the same class have the same number of neighbors from every other class. More formally, for each pair classes $U_i, U_j$ and every two nodes $v,v^\prime \in U_i$, we have 
\[|\nb(v)\cap U_j| = |\nb(v^\prime) \cap U_j|\;.\]
We call any partition with the above property {\bf equitable}. For the edge-colored case, we have a slightly more complicated definition: $\cal U$ is {\bf equitable}, whenever it holds that for each pair classes $U_i, U_j$, every edge color $c$, and every two nodes $v,v^\prime \in U_i$
\[|\{w \in U_j\ |\ \lambda_0(\{v,w\}) = c \}| = |\{w^\prime \in U_j\ |\ \lambda_0(\{v^\prime,w^\prime\}) = c \}|\;.\]
In other words, we say that every two nodes in a class have the same number of edges of the same color going into every other class.

 In fact, it can be shown that Alg.~\ref{alg:colpass} computes the coarsest such partition of a graph, and 
%In fact, 
the relationship between equitable partitions and color-passing has been well-known in graph theory~\cite{Ramana1994}.
% {\todo theorems of the birkhoff type}. 
%and one can show that the colors computed by color-passing in every iteration correspond to the so-called iterated degree (or color for the case of colored graphs) sequences.

Observe that like MRFs, which have variables and factors, linear programs are bipartite objects as well: they consists of variables and constraints. As such, they are more naturally represented by bipartite graphs, hence we will now narrow down our discussion to bipartite graphs. We say that a colored graph $G = (V,E,\lambda)$ is {\bf biparite}, if the vertex set consists of two subsets, $V=A\cup B$, such that every edge in $E$ has one end-point in $A$ and the other in $B$. The notion of equitable partitions apply naturally to the bipartite setting -- we will use the notation ${\cal U} = \{P_1,\ldots,P_p, Q_1,\ldots,Q_q\}$ to indicate that the classes over the subset $A$ are disjoint from the classes over the subset $B$. I.e., no pair $v\in A, v^\prime \in B$ can be in the same equivalence class.   

With this in mind, we introduce fractional automorphisms. 
Note that our definition is slightly modified from the original one \cite{Ramana1994}, in order to accomodate the bipartite setting.
\begin{definition}
Let $\bf M$ be an $m\times n$ real matrix, such as the (colored, weighted) adjacency matrix of a bipartite graph. A fractional automorphism of $\bf M$ is a pair of doubly stochastic (meaning that the entries are non-negative, and every row and column sum to one) matrices $\+X_P, \+X_Q$ such that \[\+M\+X_P = \+X_Q\+M.\]
\end{definition}
%A surprising and very useful fact, which plays a central role in this work, is that 
The following theorem establishes the correspondence between equitable partitions and fractional automorphisms.
\begin{theorem}[\cite{Ramana1994,grohe13arxiv}]
\label{thm:fracauto}
Let $G$ be a bipartite graph. Then:
\begin{itemize}
\item[i)] if ${\cal U} = \{P_1,\ldots,P_p, Q_1,\ldots,Q_q\}$ is an equitable partition of $G$, then the matrices $\+X_P, \+X_Q$, having entries 
\begin{align}
\label{eq:flatmatrix}
\+(X_P)_{ij} =
\begin{cases}
 1/|P| & \text{ if both vertices } i,\ j \text{ are in the same } P \in {\cal U},\\
 0 & \text{ otherwise,}
\end{cases}\nonumber\\
\+(X_Q)_{ij} =
\begin{cases}
 1/|Q| & \text{ if both vertices } i,\ j \text{ are in the same } Q \in {\cal U},\\
 0 & \text{ otherwise,}
\end{cases}
\end{align}
is a fractional automorphism of the adjacency matrix of $G$.  
\item[ii)] conversely, let $\+X_P,\+X_Q$ be a fractional automorphism of the (colored, weighted) adjacency matrix of the bipartite $G$ with edge set $V=A\cup B$. Then the partition ${\cal U}$, where vertices $i,j \in A$ belong to the same $P\in {\cal U}$ if and only if at least one of $(\+X_P)_{ij}$ and $(\+X_P)_{ji}$ is greater than $0$, respectively $i,j\in B$ belong to a class $Q$ if $(\+X_Q)_{ij}$ or $(\+X_Q)_{ji}$ is greater than $0$,  is an equitable partition of $G$.
\end{itemize}
\end{theorem}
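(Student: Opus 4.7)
The plan is to prove the two directions of the correspondence by unfolding the matrix identity $\+M \+X_P = \+X_Q \+M$ entry by entry, exploiting the block-constant form of the matrices given in \eqref{eq:flatmatrix}.

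For direction $(i)$, fix an equitable partition $\cal U$ and compute $(\+M \+X_P)_{i,j}$ for $i \in Q_r$ and $j \in P_s$. Since $(\+X_P)_{\cdot j}$ is $1/|P_s|$ on $P_s$ and zero elsewhere, this entry collapses to $\frac{1}{|P_s|}\sum_{k \in P_s} \+M_{i,k}$. Equitability of $\cal U$ says this sum is a class-to-class invariant $d(Q_r,P_s)$ depending only on the classes, not on the representative $i$. A symmetric calculation gives $(\+X_Q \+M)_{i,j} = d(P_s,Q_r)/|Q_r|$. Finally, a double-counting argument — counting the edges between $P_s$ and $Q_r$ from each endpoint — yields $|Q_r|\cdot d(Q_r,P_s) = |P_s| \cdot d(P_s,Q_r)$, which is exactly what is needed for the two sides to coincide. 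The edge-colored (or weighted) case is obtained by carrying out the same computation per color class and summing.

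For direction $(ii)$, the argument splits in two. First, I would show that the ``or'' support relation really is an equivalence relation on $A$ (and analogously on $B$), so that the $P$'s and $Q$'s form a well-defined partition. This uses the doubly stochastic structure of $\+X_P$: each row and each column sums to one with non-negative entries, which — after an appropriate permutation — forces the support of $\+X_P$ to be block-diagonal, with each diagonal block being itself a doubly stochastic matrix. These blocks are exactly the candidate classes $P_m$. Second, given the block structure, I would feed two arbitrary representatives $i, i' \in P_m$ into the commutativity relation and conclude from $(\+M\+X_P)_{a,i} = (\+X_Q\+M)_{a,i}$ (together with the analogous identity at $i'$) that $|\nb(a) \cap P_m|$ depends only on the class of $a$ — which is precisely equitability.

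The main obstacle I anticipate lies in the first sub-task of $(ii)$: showing that the support of a doubly stochastic $\+X_P$ necessarily aligns with the equivalence classes of the ``or'' relation rather than producing some irregular tangled pattern that would break transitivity. The cleanest way to discharge this is to combine the Birkhoff--von~Neumann decomposition of $\+X_P$ as a convex combination of permutation matrices with the fractional automorphism constraint $\+M \+X_P = \+X_Q \+M$, which rules out cross-block leakage; alternatively, a direct combinatorial row/column-sum argument within each weakly connected component of the support yields the same conclusion. Everything else is routine bookkeeping.
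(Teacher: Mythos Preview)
The paper does not actually prove this theorem; it is stated with citations to \cite{Ramana1994,grohe13arxiv} and no proof environment follows. So there is nothing in the paper to compare your argument against beyond noting that your sketch is essentially the standard one found in those references: direction~(i) via the double-counting identity $|Q_r|\cdot d(Q_r,P_s)=|P_s|\cdot d(P_s,Q_r)$, and direction~(ii) by reading off equitability from the commutation relation once the block structure of the support is established.

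One caveat on your handling of~(ii). Your claim that double stochasticity alone ``forces the support of $\+X_P$ to be block-diagonal'' is false as stated: the $4$-cycle permutation matrix is doubly stochastic, yet under the relation ``$(\+X_P)_{ij}>0$ or $(\+X_P)_{ji}>0$'' we get $1\sim 2$ and $2\sim 3$ but not $1\sim 3$, so transitivity fails. The statement in the paper (and in the cited sources) should be read as taking the reflexive--transitive closure of that relation, i.e.\ the weakly connected components of the support graph; your alternative ``row/column-sum argument within each weakly connected component'' is in fact the correct route, not merely an alternative. Birkhoff--von~Neumann combined with $\+M\+X_P=\+X_Q\+M$ does not by itself rule out a cyclic support pattern, so that branch of your plan would not close the gap. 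Once you pass to connected components, the restriction of $\+X_P$ to each block is again doubly stochastic, and your second sub-task (deriving equitability from commutation) goes through as you describe.
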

%\begin{proof}
%See ramana or grohe
%\end{proof}
In the following, part $i)$ of the above Theorem will be of particular interest to us. We will shortly show how we can use color-passing to construct equitable partitions of linear programs. Encoding these equitable partitions as fractional automorphisms will provides us with insights into the geometrical aspects of lifting and will be an essential tool for proving its soundness.

Note that any graph partition (equitable or not) can be turned into a doubly stochastic matrix using~\eqref{eq:flatmatrix}. However, keep in mind that the resulting matrix will not be a fractional automorphism unless the partition is equitable. In any case, partition matrices have a useful property that will later on  allow us to reduce the number of constraints and variables of a linear program. Namely, 
%the A practical characterization of the relationship between equitable partitions and fractional automorphisms comes from C.D. Godsil. That is,
\begin{proposition}[\cite{god97}]\label{prop:godsil}
Let $\+X$ be the doubly stochastic matrix of some partition ${\cal U}$ according to \eqref{eq:flatmatrix}. Then $\+X = \widetilde{\+B}\widetilde{\+B}^T$, with  
\begin{equation}
\label{eq:charmatrix}
{\widetilde{\bf B}}_{im} = 
 \begin{cases}
   \frac{1}{\sqrt{|U_m|}} & \text{ if vertex } i \text{ belongs to part } U_m,\\
   0       & \text{ otherwise. }
  \end{cases}
\end{equation}
\end{proposition}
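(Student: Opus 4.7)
The plan is to verify the identity $\+X = \widetilde{\+B}\widetilde{\+B}^T$ by a direct entrywise computation. By definition of matrix multiplication,
\[
(\widetilde{\+B}\widetilde{\+B}^T)_{ij} = \sum_{m=1}^{|{\cal U}|} \widetilde{B}_{im}\,\widetilde{B}_{jm},
\]
and the key observation is that each row of $\widetilde{\+B}$ has exactly one nonzero entry, namely in the column corresponding to the unique class containing that vertex. Thus for fixed $i,j$ at most one term in the sum is nonzero.

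From here I would split into two cases. First, if $i$ and $j$ both belong to the same class $U_k$, then the only surviving term in the sum is $m=k$, which contributes $\frac{1}{\sqrt{|U_k|}}\cdot \frac{1}{\sqrt{|U_k|}} = \frac{1}{|U_k|}$. This matches the definition of $\+X_{ij}$ in~\eqref{eq:flatmatrix}. Second, if $i$ and $j$ lie in different classes, then for every $m$ at least one of the factors $\widetilde{B}_{im}$, $\widetilde{B}_{jm}$ is zero, so $(\widetilde{\+B}\widetilde{\+B}^T)_{ij}=0$, again matching~\eqref{eq:flatmatrix}.

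Since the two cases exhaust all possibilities and in each case $(\widetilde{\+B}\widetilde{\+B}^T)_{ij} = \+X_{ij}$, the factorization follows. There is no real obstacle here: the proposition is essentially a bookkeeping statement, and the normalization $1/\sqrt{|U_m|}$ in the definition of $\widetilde{\+B}$ was chosen precisely so that the diagonal-like contribution $1/|U_k|$ emerges. The only mild subtlety worth stating explicitly is that vertices belong to exactly one class of the partition, which is what makes the row of $\widetilde{\+B}$ have a single nonzero entry and collapses the sum to a single term.
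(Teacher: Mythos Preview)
Your proof is correct; the entrywise verification is exactly the natural way to establish this factorization, and the argument has no gaps. The paper does not actually supply its own proof of this proposition---it is stated with attribution to \cite{god97} and left as a cited fact---so there is nothing to compare against beyond noting that your direct computation is the standard elementary justification.
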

%\begin{proof}
%See~\cite{god97}.
%\end{proof}

\subsection{Fractional Automorphisms of Linear Programs}

% The first connection between linear programs and fractional automorphisms was established in \cite{Rubalcaba2005} while investigating graph properties which are shared between fractionally isomorphic graphs. We will now build upon Rubalcaba's argument to show that LPs, like MRFs, can be lifted using Alg.~\ref{alg:colpass}.
As we already mentioned, we are going to apply equitable partitions through fractional automorphisms to reduce the size of linear programs. Hence, the obvious question presents itself: what is an equitable partition (resp. fractional automorphism) of linear program?

In order to answer the question, we need a graphical representation of $L=(\+A,\+b,\+c)$, called the {\bf coefficient graph} of $L$, $G_L$. To construct $G_L$, we add a vertex to $G_L$ for every of the $m$ constraints $n$ variables of $L$. Then, we connect a constraint vertex $i$ and variable vertex $j$ \emph{if and only if} ${\bf A}_{ij}\neq 0$. Furthermore, we assign colors to the edges $\{i,j\}$ in such a way that $\colorv(\{i,j\}) = \colorv(\{u,v\}) \iff {\bf A}_{ij} = {\bf A}_{uv}$. Finally, to ensure that  $\bf c$ and $\bf b$ are preserved by any automorphism we find, we color the vertices in a similar manner, i.e., for row vertices $i,j$ $\colorv(i) = \colorv(j) \iff \+b_i = \+b_j$ and $color(u) = color(v) \iff {\bf c}_u = {\bf c}_v$ for column vertices. We must also choose the colors in a way that no pair of row and column vertices share the same color; this is always possible. %This is summarized in Algorithm~\ref{}.
 
\begin{figure*}[t]
\centering
\includegraphics[width=0.5\textwidth]{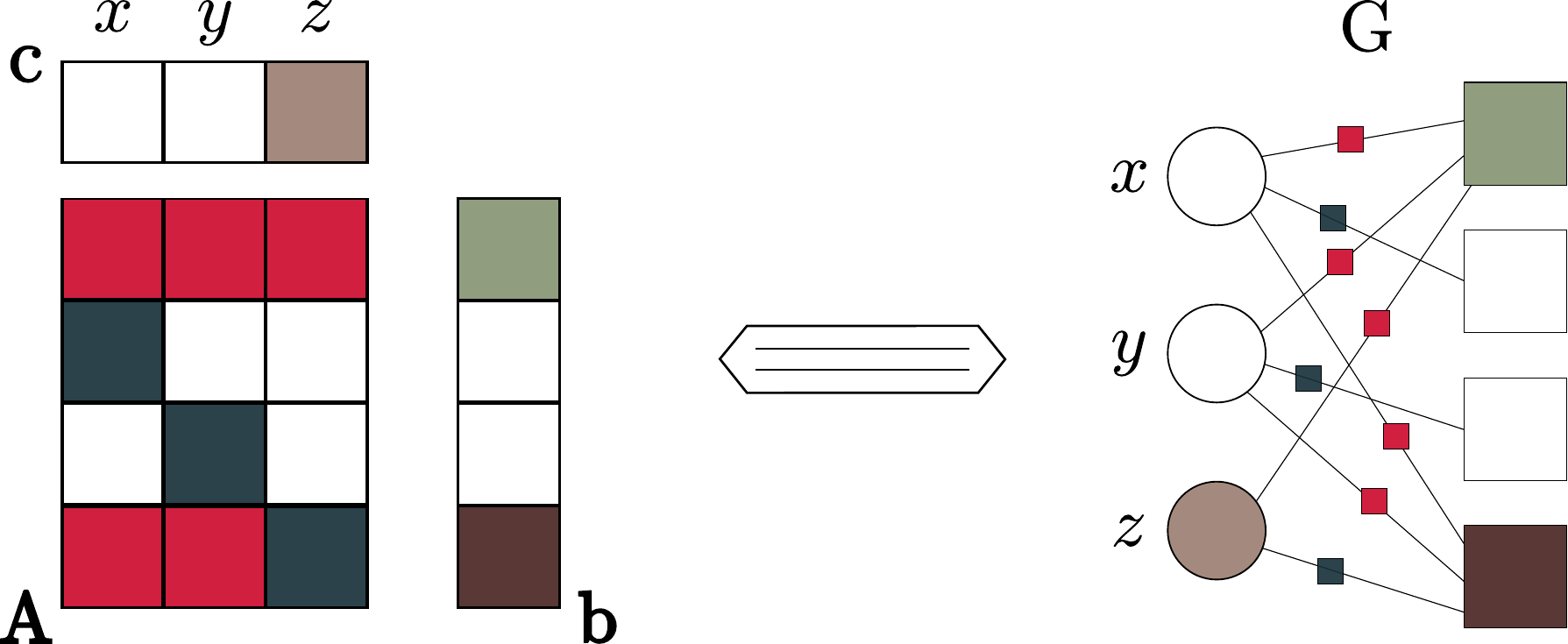}
\caption{Construction of the coefficient graph $G_L$ of $L^0$. On the left-hand side, the coloring
of the LP is shown. This turns into the colored coefficient graph shown on the right-hand side.
\label{fig:symDet}}
\end{figure*}

To illustrate this, consider the following toy LP: 
\begin{lstlisting}[language=ampl,frame=none,basicstyle=\footnotesize\ttfamily]
var p/1;

maximize: sum_{gadget(X)} p(X);

subject to: sum{widget(X)}p(X) + sum{gadget(X)} p(X) <= 1;
subject to {widget(X)}: widget(X) <= 0;
subject to: sum{widget(X)}p(X) - sum{gadget(X)} p(X) <= -1;
\end{lstlisting}
with knowledge base LogKB (recall that logical atoms are assumed to evaluate to $0$ and $1$ within an RLP):
\begin{lstlisting}[language=ampl,frame=none,basicstyle=\footnotesize\ttfamily,numbers=none,backgroundcolor=\color{blue!10}]
widget(x).
widget(y).
gadget(z).
\end{lstlisting}
If we ground this linear program and convert it to dual form (as in \eqref{eq:dualform}), we obtain the following linear program $L^0 = (\+A,\+b,\+c)$  
\begin{align*}
    \operatorname*{minimize}_{[x,y,z]^T \in \mathbb{R}^3}\quad  &\; 0x + 0y + 1z\\ 
    \text{subject to}\quad & \begin{bmatrix}
       1 & 1 & 1           \\[0.3em]
       -1 & 0 & 0            \\[0.3em]
       0 & -1 & 0            \\[0.3em]
       1 & 1 & -1            \\[0.3em]
     \end{bmatrix} 
      \begin{bmatrix}
       x\\
       y\\
       z\\
     \end{bmatrix} \leq 
           \begin{bmatrix}
       1\\
       0\\
       0\\
       -1\\
     \end{bmatrix}\;,
 \end{align*}
where for brevity we have substituted ${\tt p}(x), {\tt p}(y), {\tt p}(z)$ by $x,y,z$ respectively. The coefficient graph
of $L^0$ is shown in Fig.~\ref{fig:symDet}.

We call an {\bf equitable partition of a linear program} $L$ the equitable partition of the graph $G_L$\footnote{using the notion of equitable partitions of bipartite colored graphs from the previous section.}.  
Suppose now we compute an equitable partition ${\cal U} = \{P_1,\ldots,P_p, Q_1,\ldots,Q_q\}$ of $G_L$ using Algorithm~\ref{alg:colpass} and compute the corresponding fractinal automorphism $(\+X_P, \+X_Q)$ as in \eqref{eq:flatmatrix}. Observer that $(\+X_P, \+X_Q)$ will have the following properties: 
\begin{itemize}
\item[i)] due to Theorem~\ref{thm:fracauto}, we have $\+X_Q\+A=\+A\+X_P$;
\item[ii)] by our choice of initial colors of $G_L$, the partition ${\cal U}$ will never group together variable vertices $i,j$ with $\+c_i \neq \+c_j$, nor will it group constraint vertices $i,j$ with $\+b_i \neq \+b_j$. By \eqref{eq:flatmatrix}, this implies \[\+c^T\+X_P = \+c^T\]  and \[\+X_Q\+b = \+b\;.\]   
\end{itemize}

This yields the definition of a {\bf fractional automorphism of linear programs} -- we call a pair of doubly stochastic matrices $(\+X_P, \+X_Q)$ a {\bf fractional automorphism of the linear program } $L$ if it satisfies properties $i)$ and $ii) $  as above. 
\subsection{Lifted Linear Programming}

With this at hand, we are ready for the main part of our argument. We split the argument in two parts: first, we will show that if an LP $L$ has an optimal solution $\+x^*$, then it also has a solution $\+X_P\+x^*$. That is, if we add the constraint $\+x \in \operatorname{span}(\+X_P)$ to the linear program, we will not cut away the optimum. The second claim is that $\+x \in \operatorname{span}(\+X_P)$ can be realized by a projection of the LP into a lower dimensional space. So, we can actually project to a low-dimensional space, solve the LP there, and then recover the high-dimensional solution via simple matrix multiplication. We recall that his idea is illustrated in Figure~\ref{fig:polySym}.

Let us now state the main result. 
\begin{theorem}
Let $L=(\+A,\+b,\+c)$ be a linear program and $(\+X_P,\+X_Q)$ be a fractional automorphism of $L$. Then, it holds that if $\+x$ is a feasible in $L$, then $\+X_P\+x$ is feasible as well and both have the same objective value. As a consequence, if $\+x^*$ is an optimal solution, $\+X_P\+x^*$ is optimal as well.
\end{theorem}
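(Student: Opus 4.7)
The plan is to verify the three defining properties of a fractional automorphism --- namely $\+A\+X_P=\+X_Q\+A$, $\+c^T\+X_P=\+c^T$, and $\+X_Q\+b=\+b$ --- interact correctly with the LP constraints and objective to give both feasibility and objective-preservation of $\+X_P\+x$. The optimality claim is then an immediate corollary: if $\+x^*$ is optimal with value $\+c^T\+x^*$, then $\+X_P\+x^*$ is feasible (by the first claim) with identical value (by the second), hence optimal as well.

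For feasibility, I would start from $\+A\+x\leq\+b$ and left-multiply by $\+X_Q$. The key observation is that $\+X_Q$ is doubly stochastic, hence entry-wise non-negative, so multiplication on the left preserves the component-wise inequality: $\+X_Q\+A\+x \leq \+X_Q\+b$. Using the commutation property $\+X_Q\+A=\+A\+X_P$ on the left-hand side and $\+X_Q\+b=\+b$ on the right-hand side, this becomes $\+A(\+X_P\+x)\leq \+b$, which is exactly the statement that $\+X_P\+x$ is feasible. The non-negativity of $\+X_Q$ is the only non-algebraic ingredient in this step, and it is precisely what distinguishes fractional automorphisms from arbitrary commuting matrices.

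For objective preservation, I would simply compute $\langle \+c,\+X_P\+x\rangle = \+c^T\+X_P\+x = (\+c^T\+X_P)\+x = \+c^T\+x = \langle\+c,\+x\rangle$, invoking property $ii)$ in the middle equality. Together with the feasibility argument, this yields the first half of the theorem. The optimality statement follows without further work: any optimal $\+x^*$ maps to a feasible point with the same cost, so $\+X_P\+x^*$ attains the optimum as well.

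I do not expect any serious obstacle here; the proof is essentially a three-line matrix manipulation once the defining identities of a fractional automorphism are in hand. The only point that warrants explicit mention is the use of non-negativity of $\+X_Q$ when pushing the inequality through, since this is where the doubly-stochastic (as opposed to merely stochastic or invertible) structure is used. It may also be worth remarking that the argument does not require $\+X_P$ to be non-negative for the objective step, and that the same proof scheme will later motivate replacing the LP by its restriction to $\operatorname{span}(\+X_P)$, since we have shown this subspace contains an optimum.
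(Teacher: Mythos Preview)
Your proposal is correct and matches the paper's own proof essentially line for line: left-multiply the feasibility inequality by the entrywise non-negative $\+X_Q$, invoke $\+X_Q\+A=\+A\+X_P$ and $\+X_Q\+b=\+b$ to obtain $\+A(\+X_P\+x)\leq\+b$, then use $\+c^T\+X_P=\+c^T$ for objective preservation. The paper likewise flags (in a footnote) that only non-negativity of $\+X_Q$ is needed for the inequality step, exactly as you do.
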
 

\begin{proof}
Let $\+x$ be feasible in $L = (\+A,\+b,\+c)$, i.e. $\+A\+x \leq \+b$. Observe that left multiplication of the system by a doubly stochastic matrix preserves the direction of inequalities. More precisely \[\+A\+x \leq \+b \Rightarrow \+S\+A\+x \leq \+S\+b\;,\] for any doubly stochastic $\+S$\footnote{actually, this holds for any positive matrix $\+S$, since $\+S(\+A\+x)_i = \sum_j \+S_{ij}(\+A\+x)_j \leq \sum_j \+S_{ij}\+b_j = (\+S\+b)$, since $\+S$ is positive and $(\+A\+x)_j \leq \+b_j$ by assumption. }.
 So now, we left-multiply the system by $\+X_Q$:
 \[\+A\+x \leq \+b \Rightarrow \+X_Q\+A\+x \leq \+X_Q\+b = \+A\+X_P\+x \leq \+b ,\]
 since $\+X_Q\+A = \+A\+X_P$ and $\+X_Q\+b = \+b$. This proves the first part of our Theorem. Finally, observe that $\+c^T(\+X_P\+x) = \+c^T\+x$ as $\+c^T\+X_P = \+c^T$.  
\end{proof}

We have thus shown that if we add the constraint $\+x \in \operatorname{span}(\+X_P)$ to $L$, we can still find a solution of the same quality as in the original program. How does this help to reduce dimensionality? To answer, we observe that the constraint $\+x \in \operatorname{span}(\+X_P)$ can be implemented implicitly, through reparametrization. That is, instead of adding it to $L=(\+A,\+b,\+c)$ explicitly, we take the LP $L^\prime = (\+A\+X_P,\+b,\+X_P^T\+c)$. Now, recall that $X_P$ was generated by an equitable partition, and it can be factorized as $X_P = \widetilde{\+B}_P\widetilde{\+B}_P^T$ where $\widetilde{\+B}_P$ is the normalized incidence matrix of $\{P_1,\ldots,P_p\}\subset{\cal U}$ as in \eqref{eq:charmatrix}.

Note that
the span of $\+X_P = \widetilde{\bf B}_P\widetilde{\bf B}_P^T$ is equivalent (in the vector space isomorphism sense) to the column space of $\widetilde{\bf B}_P$. That is, every $\+x\in\mathbb{R}^n$, $\+x \in \operatorname{span}(\+X_P)$ can be expressed as $\+x = \widetilde{\bf B}_P\+y$ some $y\in \mathbb{R}^p$ and conversely, $\widetilde{\bf B}_P\+y \in\operatorname{span}(\+X_P)$ for all $\+y\in\mathbb{R}^p$.Hence, we can replace $L^\prime = (\+A\+X_P,\+b,\+X_P^T\+c)$ with the equivalent $L^{\prime\prime} = (\+A\widetilde{\bf B}_P,\+b,\widetilde{\bf B}^T_P\+c)$ . Since this is now a problem in $p \leq n$ variables, i.e., of reduced dimension, 
%{\todo this can be phrased better}) 
a speed-up of solving the original LP is possible. Finally, by the above, if $\+y^*$ is an optimal solution of $L^{\prime \prime}$, $\widetilde{\bf B}_P\+y$ is an optimum solution of $L$.

\begin{algorithm}[t]
%\linesnumbered
%\SetAlgoLined
\SetKwFunction{newColor}{newColor}
\KwIn{An inequality-constrained LP, $L=(\mathbf{A}, \mathbf{b}, \mathbf{c})$}
\KwOut{$\mathbf{x}^{*} = \argmin_{\{\mathbf{x}|\mathbf{Ax} \leq \mathbf{b}\}}{\mathbf{c}^T\mathbf{x}}$}
Construct the coefficient graph $G_L$\;
Lift $G_L$ using color-passing, see Alg.~\ref{alg:colpass}.\;
Read off the characteristic matrix $\widetilde{\mathbf{B}}_P$\;
Obtain the solution $\mathbf{y}$ of the LP $(\mathbf{A}\widetilde{\mathbf{B}}_P,\mathbf{b},\widetilde{\mathbf{B}}_P^T\mathbf{c})$ using any standard LP solver\;
\Return{$\mathbf{x}^{*} = \widetilde{\mathbf{B}}_P\mathbf{y}$}\;
\caption{Lifted Linear Programming\label{alg:duallift}}
\end{algorithm} 

%So far, we have reduced the number of LP variables using the equivalence classes of the equitable partition. 

Overall, this yields the {\bf lifted linear programming} approach as summarized in Alg.~\ref{alg:duallift}. Given an LP,
first construct the coefficient graph of the LP (line 1). Then, lift the coefficent graph (line 2)
and read off the characteristic matrix (line 3). Solve the lifted LP (line 4) and ``unlift'' the
lifted solution to a solution of the original LP (line 5). 

Applying this lifted linear programming to LPs induced by RLPs, we can rephrase  
{\bf relational linear programming} as follows: 
\begin{enumerate}
\item Specify an RLP $R$.
\item Given a LogKB, ground $R$ into an LP $L$ (Alg.~\ref{gag}).
\item Solve $L$ using lifted linear programming (Alg.~\ref{alg:duallift}).
\end{enumerate}

Before illustrating relational linear programming, we would like to note that this method of constructing fractional automorphisms works for any equitable partition, not only the one resulting from running color-passing. For example, an equitable partition of a graph can be constructed out of its automorphism group, by making two vertices equivalent whenever there exists a graph automorphism that maps one to the other. The resulting partition is called the orbit partition of a graph, see e.g.~\cite{godsil01}. 
Applying this partitioning method to coefficient graphs of linear programs and using the corresponding fractional automorphism is equivalent to previous theoretical well-known results in solving linear programs under symmetry,
see e.g.~\cite{Boedi13} and references in there. However, there are two major benefits for using the color-passing partition instead of the orbit partition:
% for lifting/compressing LPs:
\begin{itemize} 
\item The color-passing partition is at least as coarse as the orbit 
partition, see e.g.~\cite{godsil01}. To illustrate this, consider the so-called Frucht graph  as shown in Fig.~\ref{fig:frucht}. Suppose we turn this graph into a linear program by introducing constraint nodes along the edges and coloring everything with the same color. The Frucht graph has two extreme properties with respect to equitable partitions: 1) it is asymmetric, meaning that the orbit partition is trivial -- one vertex per equivalence class; 2) it is regular (every vertex has degree 3); as one can easily verify, in this case the coarsest equitable partition consists of a single class! 
\begin{figure*}[t]
\centering
\includegraphics[trim = 0cm 0cm 11cm 0cm, clip=true,width=0.6\textwidth]{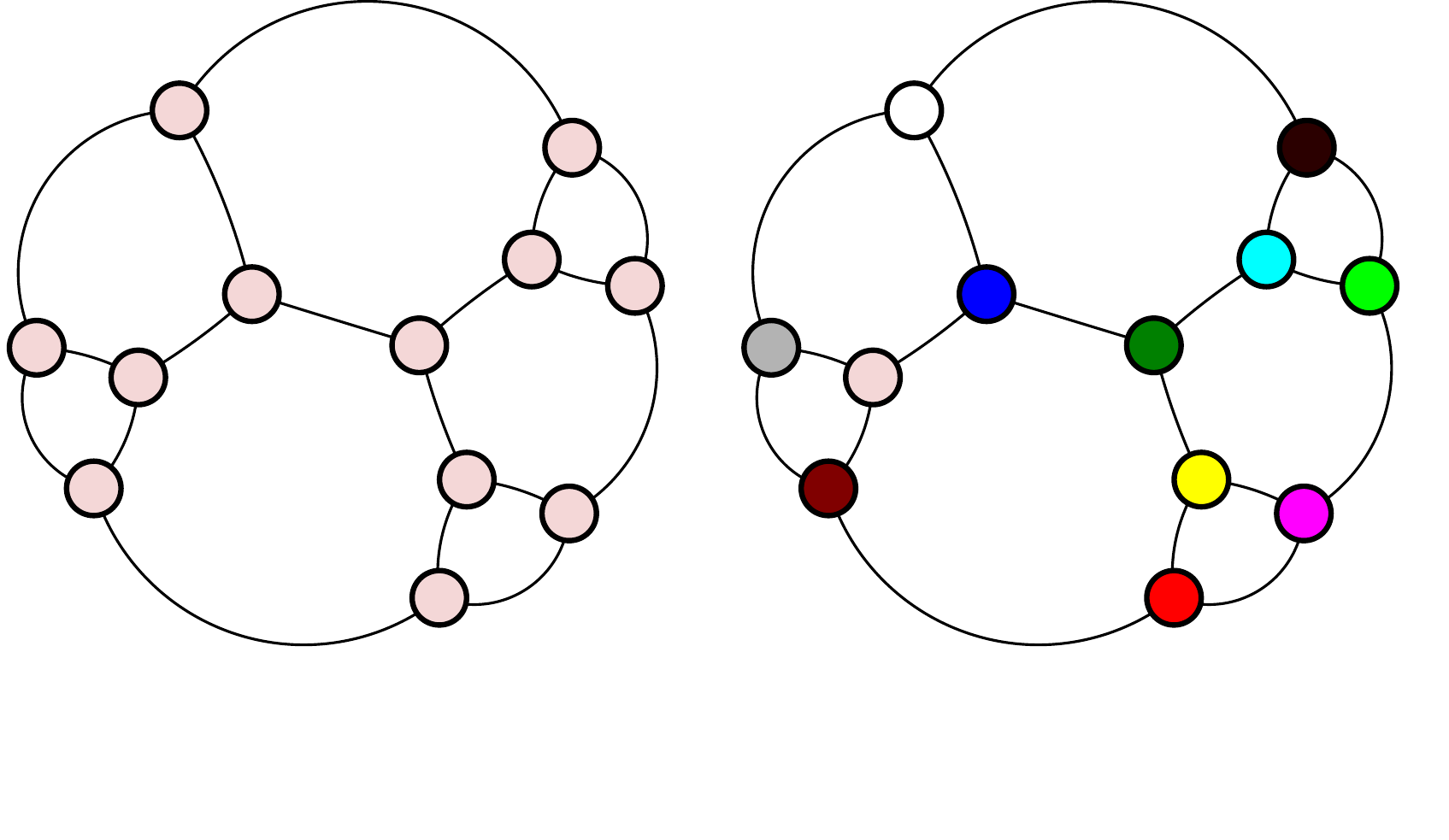}
\caption{The Frucht graph with 12 nodes. The colors indicate the resulting node partitions using color-passing (the coarsest equitable partition, eft) and using automorphisms (the orbit partition, right).}
\label{fig:frucht}
\end{figure*} 
Due to these two properties, in the case of the Frucht graph the orbit partition yields no compression, whereas the coarsest equitable resp.~color-passing partition produces an LP with a single variable.
\item The color-passing partition can be computed in quasi-linear time~\cite{berbongro13}, yet current tools for orbit partition enumeration have significantly worse running times. Thus, by using color-passing we achieve strict gains in both compression and efficiency compared to using 
orbits. 
\end{itemize}
Let us now illustrate relational linear programming using several AI tasks.

%!TEX root = main.tex

\section{Illustrations of Relational Linear Programming}
\label{exp}
Our intention here is to investigate the viability of the ideas and concepts of relational linear programming
through the following questions:
\begin{description}
\item[(Q1)] Can important AI tasks be encoded in a concise and readable relational way using RLPs?
\item[(Q2)] Are there (R)LPs that can be solved more efficiently using lifting? 
\item[(Q3)] Does relational linear programming enable a programming approach to AI tasks
facilitating the construction of more sophisticated models from simpler ones by adding rules?
\end{description}
If so, relational linear programming has the potentially to make linear models faster to write and easier to understand,
reduce the development time and cost to encourage experimentation, and in turn 
reduce the level of expertise necessary to build AI applications. Consequently, our primary
focus is not to achieve the best performance by using advanced models. Instead we will  focus
on basic models. 

We have implemented a prototype system of relational linear programming, and 
illustrate the relational modeling of several AI tasks:  computing the value function of Markov decision processes, 
performing MAP LP inference in Markov logic networks and performing
collective transductive classification using LP support vector machines.

\subsection{Lifted Linear Programming for Solving Markov Decision Processes}
\label{sec:mdp}
Our first application for illustrating relational linear programing is the computation of the value function of Markov Decision Problems (MDPs). 
The LP formulation of this task is as follows \cite{Littman95}:
\begin{align}\label{eq:mdp}
    \operatorname*{maximize}\nolimits_{\mathbf{v}} & \mathbf{1}^T\mathbf{v}, \nonumber\\
    \ \text{subject to} & \ v_i \leq c_i^k + \gamma\sum\nolimits_{j \in \Omega_S}p_{ij}^k v_j\;,
\end{align}
where $v_i$ is the value of state $i$, $c_i^k$ is the reward that the agent receives when carrying out action $k$, and $p_{ij}^k$ is the probability of transferring from state $i$ to state $j$ by taking action $k$.  $\gamma$ is a discounting factor. The corresponding RLP is given in Fig.~\ref{rlp:mdp}. Since it abstract away
the states and rewards --- they are defined in the LogKB --- it extracts the essence of computing value functions of MDPs. Given a LogKB, a ground LP is
automatically created Instead of coding the  LP by hand for each problem instance again and again as in vanilla linear programming. 
This answers question {\bf (Q1)} affirmatively.

 \begin{figure}[t]
\begin{lstlisting}[language=ampl]
var value/1;             #value function to be determined by the LP solver

maximize: sum{reward(S,_)} value(S);          #best values for all states

#encoding of discounted Bellman optimality as in inequality (5)
subject to {transProb(S,T,_)}: value(S) <= reward(S,A) +    
	gamma*sum{transProb(S,T,A)} transProb(S,T,A)*value(T);
\end{lstlisting}
\caption{An RLP for computing the value function {\tt value/1} of a Markov decision process. There is a finite set of states and actions, and the agent receives a reward {\tt reward(S,A)} for performing and action {\tt A} in state {\tt S}, specified in a LogKB.\label{rlp:mdp}}
\end{figure}

The MDP instance that we used is the well-known Gridworld (see e.g.\ \cite{suttonbarto98}). The gridworld problem consists of an agent navigating within a grid of $n\times n$ states. 
Every state has an associated reward $R(s)$. Typically there is one or several states with high rewards, considered the goals, whereas the other states have
zero or negative associated rewards.  We considered an instance of gridworld with a single goal state in the upper-right corner with a reward of $100$.
The reward of all other states was set to $-1$. 
The scheme for the LogKB looked like this:
\begin{lstlisting}[language=ampl,frame=none,basicstyle=\footnotesize\ttfamily,numbers=none,backgroundcolor=\color{blue!10}]
reward(state(n-1,n),right)=100.
reward(state(n,n-1),up)=100.
reward(state(X,Y),_)=-1 :- X>0, X<n-1, Y>0, Y<n-1.
\end{lstlisting}
As can be seen in Fig.~\ref{fig:ExpsMDP}(a), this example can be compiled to
about half the original size. Fig.~\ref{fig:ExpsMDP}(b) shows that already this compression leads to improved running time.
We now introduce additional symmetries by putting a goal in every corner of the grid. As one might expect this additional
symmetry gives more room for compression, which further improves efficiency as reflected in Figs.~\ref{fig:ExpsMDP}(c) and ~\ref{fig:ExpsMDP}(d).

\begin{figure*}[t]
\centering
\subfloat[Ground vs. lifted variables on a basic gridworld MDP.]{
\includegraphics[width=0.3\textwidth]{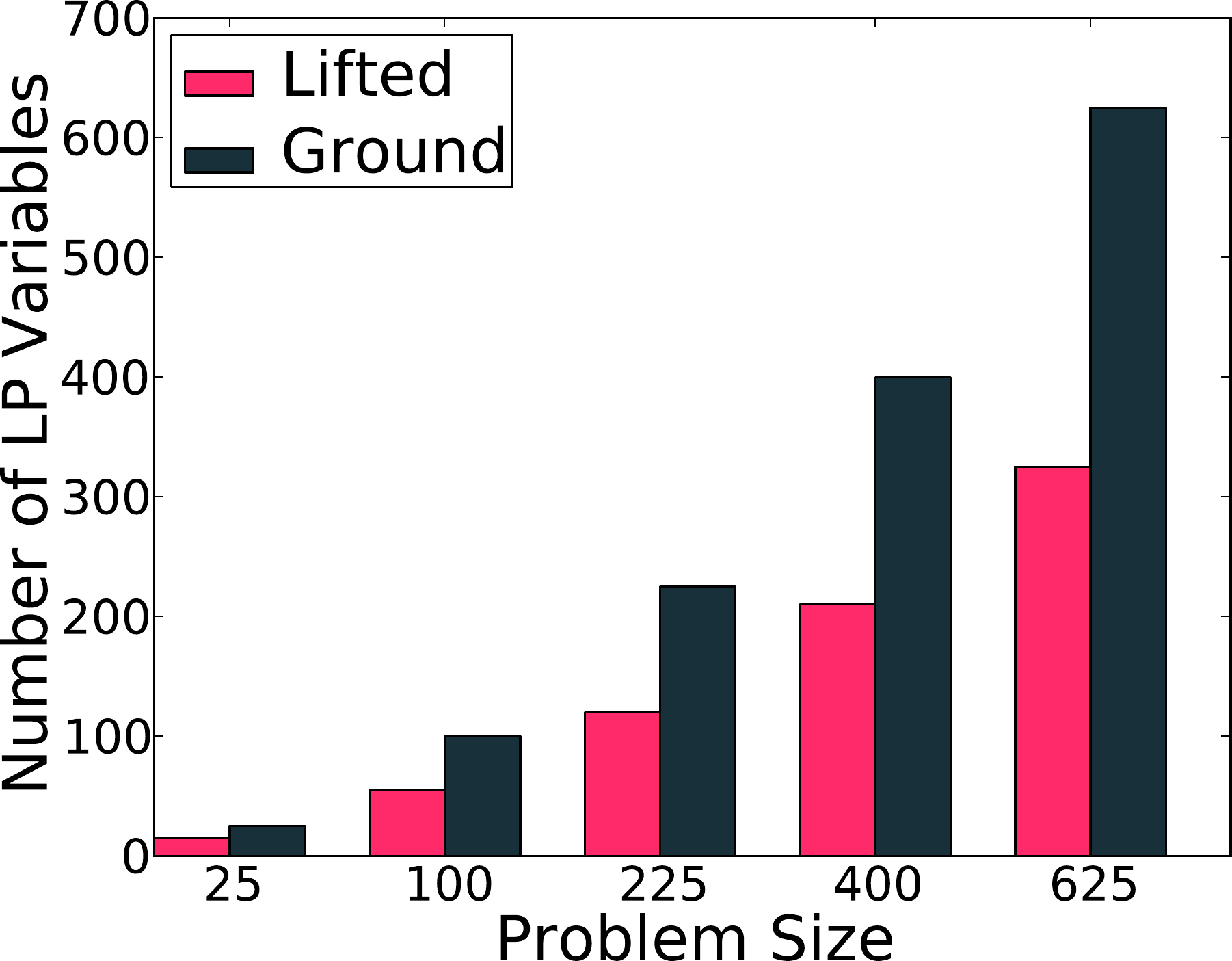}
}
\hfill
\subfloat[Measured times on a basic gridworld MDP.]{
\includegraphics[width=0.3\textwidth]{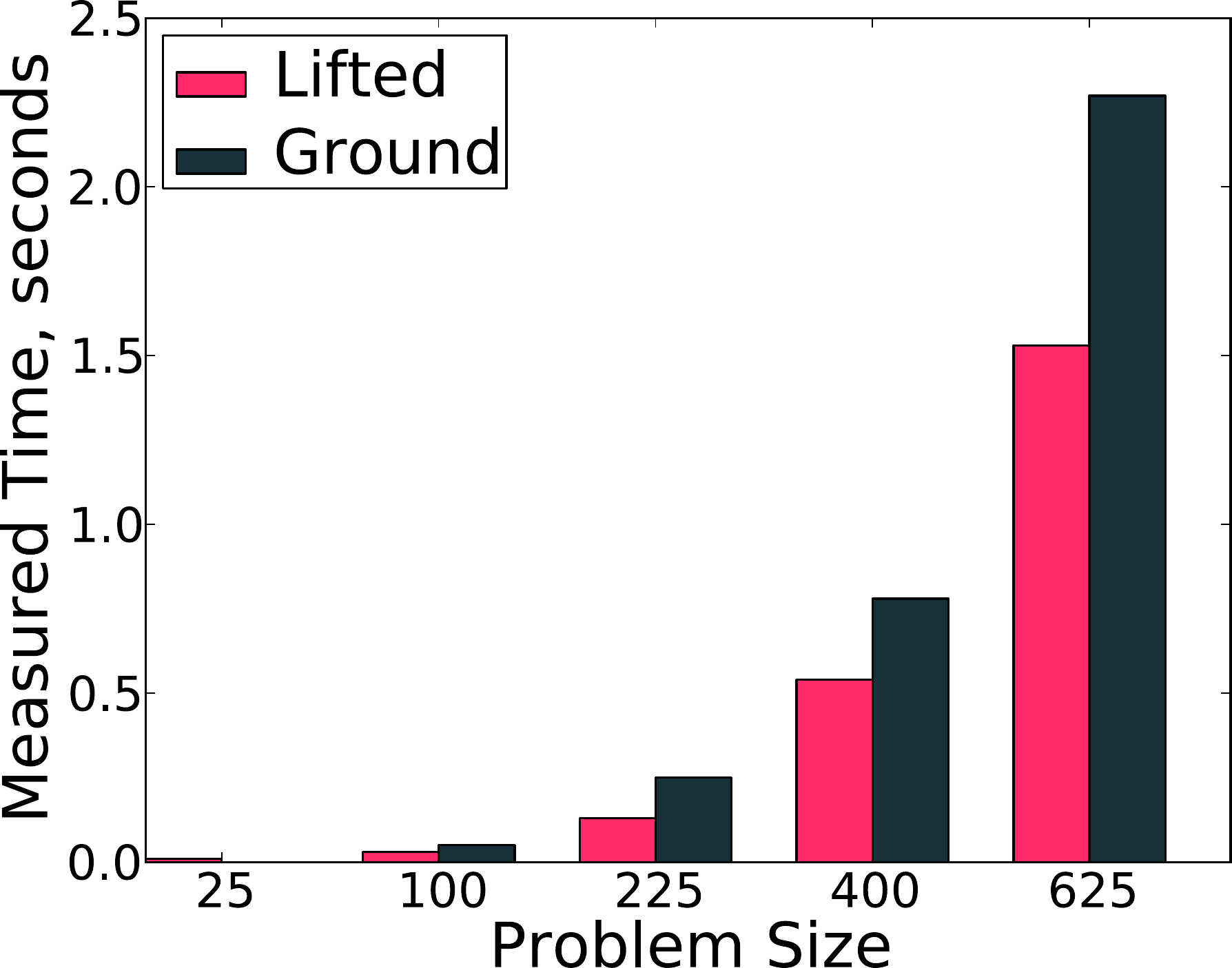}
}
\hfill
\subfloat[Variables on a gridworld with additional symmetry.]{
\includegraphics[width=0.3\textwidth]{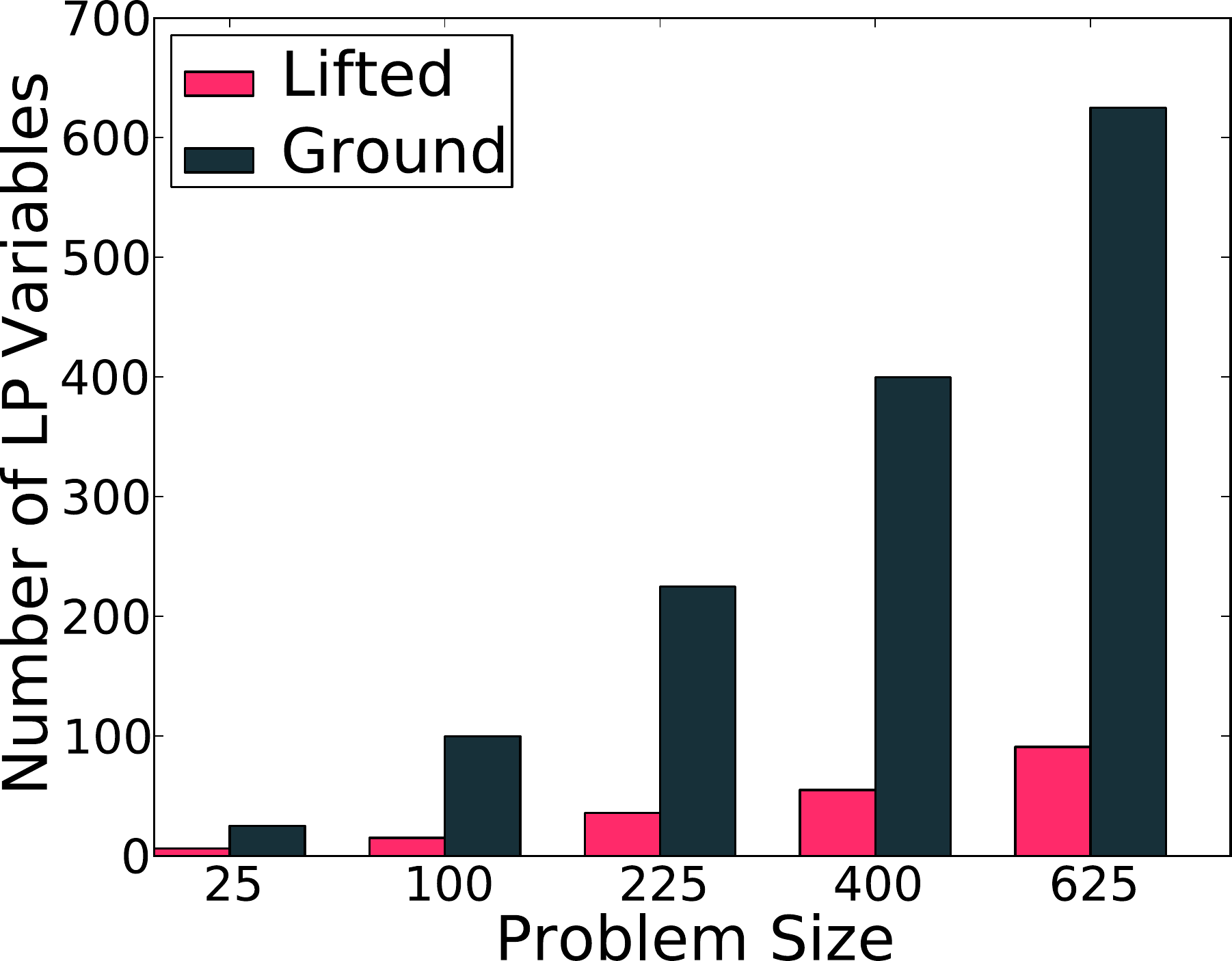}

}
\hfill
\subfloat[Measured times on a gridworld with additional symmetry.]{
\includegraphics[width=0.3\textwidth]{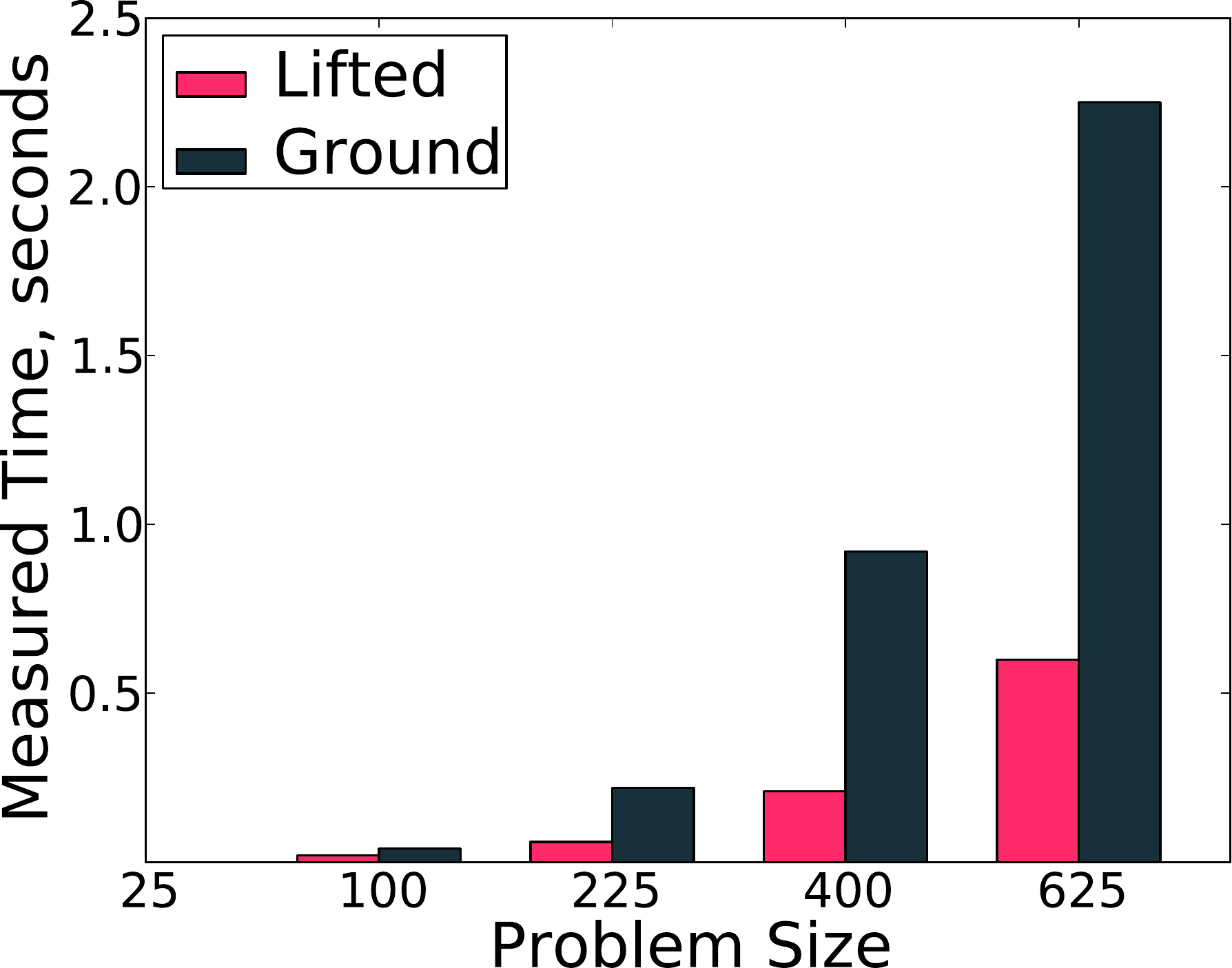}
}
\hfill
\subfloat[Variables on a gridworld with additional symmetry in sparse form.]{
\includegraphics[width=0.3\textwidth]{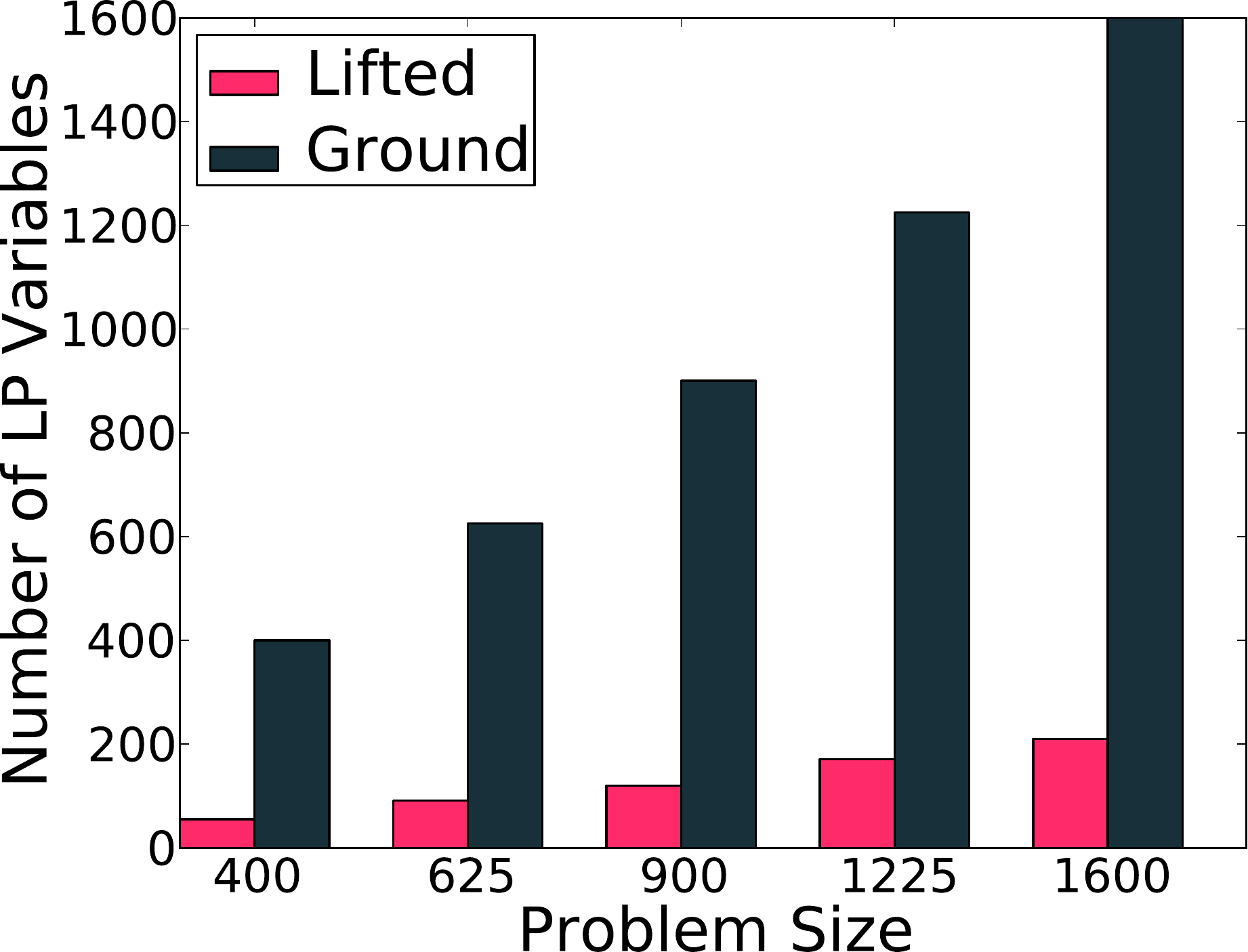}
}
\hfill
\subfloat[Measured times on a gridworld with additional symmetry in sparse form.]{
\includegraphics[width=0.3\textwidth]{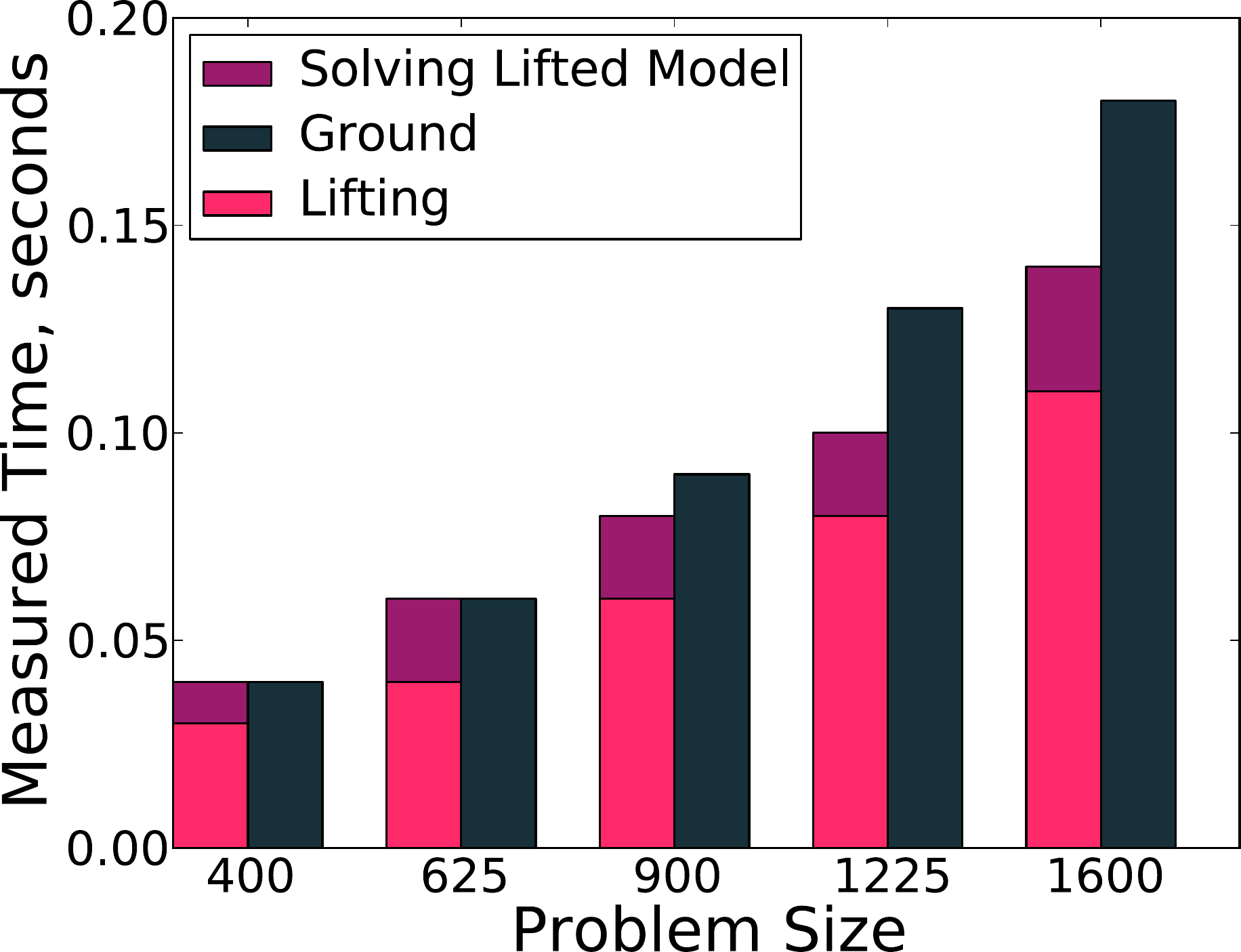}
}
\caption{\label{fig:ExpsMDP} Experimental results of relational linear programming for solving Markov decision processes.}
\end{figure*}

%The two experiments presented so far affirmatively answer question {\bf (Q1)}.
However, the examples that we have considered so far are quite sparse in their structure. 
Thus, one might wonder whether the demonstrated benefit is achieved only because we are solving sparse problem in dense form.
To address this we convert the MDP problem to a sparse representation for our further experiments. 
We scaled the number of states up to $1600$ and as one can see in Fig.~\ref{fig:ExpsMDP}(e) and (f) lifting still results in an improvement of size as well as running time.
Therefore, we can conclude that lifting an LP is beneficial regardless of whether the problem is sparse or dense, thus one might view symmetry as a dimension orthogonal to sparsity. Furthermore, in Fig.~\ref{fig:ExpsMDP}(f)
we break down the measured total time for solving the LP into the time spent on lifting and solving respectively.
This presentation exposes the fact that the time for lifting dominates the overall computation time. Clearly, if lifting was carried out in every iteration
(CVXOPT took on average around 10 iterations on these problems) the approach would not have been competitive to simply solving on the ground level.
This justifies that the loss of potential lifting we had to accept in order to not carry out the lifting in every iteration indeed pays off
{\bf (Q2)}. 
Remarkably, these results follow closely what has been achieved with MDP-specific symmetry-finding and model minimization approaches~\cite{ravi08icml,ravi01tech,Dean97}.

\subsection{Programming MAP LP Inference in Markov Logic Networks}
\begin{figure}[t]
\begin{lstlisting}[language=ampl]
var m/2;  #single node, pairwise, and
var m/4;  #triplewise probabilities
var m/6;  #of configurations to be determined by the solver
#value of the MAP assignment
innerProd = sum{w(P, V)} w(P, V) * m(P, V) + 
	sum{w(P1, P2, V1, V2)} w(P1, P2, V1, V2) * m(P1, P2, V1, V2) + 
	sum{w(P1, P2, P3, V1, V2, V3)} w(P1, P2, P3, V1, V2, V3) * 
	m(P1, P2, P3, V1, V2, V3);
atomMarg(P) = sum {w(P, V)} m(P, V);                #single node marginals
#single node marginal computed from pairwise marginals
clauseMargl1(P1, P2, V1) = sum{w(P2, V2)} m(P1, P2, V1, V2);
...
#single node marginal computed from triplewise marginales 
clauseMarg1(P1, P2, P3, V1) = sum{w(P3, V3), w(P2, V2)} 
	m(P1, P2, P3, V1, V2, V3);
...
maximise: innerProd;               #find MAP assignment with largest value  

subject to {w(P, _)}: atomMarg(P) = 1;           #normalization constraint 
#pairwise consistency constraints
subject to {w(P1, P2, V1, _)}: m(P1, V1) - clauseMarg1(P1, P2, V1) = 0;
...
#triplewise consistency constraints
subject to {w(P1, P2, P3, V1, _, _)}: 
         m(P1, V1) - clauseMarg1(P1, P2, P3, V1) = 0;
...
\end{lstlisting}
\caption{RLP encoding the MAP-LP for triplewise MLNs as shown in~\eqref{triple}. The last two constraints as well as the last two aggregates have symmetric copies that have been omitted (this redundancy is necessary, since logic predicates are not symmetric). \label{lp:map}}
\end{figure}

MLNs, see ~\cite{richardson2006markov} for more details, are a prominent model in statistical relational learning (SRL).  We here focus on MAP (maximum a posteriori) inference where we want to find a most likely joint assignment to all the random variables.  More precisely, 
an MLN induces a Markov random field (MRF) with a node for each ground atom and a clique for every ground formula. A common
approach to approximate MAP inference in MRFs is based on LP, see e.g.~\cite{wainwright2008graphical} for a general overview. 
Actually, there is a hierarchy of LP formulation for MAP inference each assuming a hypertree MRF of increasing treewidth.  Since
MLNs of interest typically consists of at least one factor with three random variables, 
we will focus on triplewise MRFs as presented e.g. in~\cite{apsel14aaai} in order to investigate {\bf (Q1)}. Given an MRF $\mathcal{M}$ induced by an MLN 
over the set of (ground) random variables $x=\{x_i,\ldots,x_n\}$ and with factors $F = \{(\theta_f,x_f)\}_f$, the MAP-LP is define as follows, see also~\cite{apsel14aaai}.
 For each subset of indices $\mathcal{I}$ taken from $\{1,\ldots,n\}$ of size $1 \leq |\mathcal{I}| \leq 3$, let $\mu_\mathcal{I}$ denote a vector of variables of size $2^{|\mathcal{I}|}$. A notation $\mu_{ijk}(x_i,x_j,x_k)$ is 
used to describe a specific variable in vector $\mu_\mathcal{I}$ corresponding to the subset $\mathcal{I}=(i,j,k)$ and entry $(x_i,x_j,x_k) \in {\{0,1\}}^3$. Additionally, let $\mathcal{I}_F$ denote the set of all ordered indices for which there exists a factor $f$ with a matching variables scope $x_f$, and let $\theta_{ijk}$ denote the log probability table of a factor whose variables scope is $(x_i,x_j,x_k)$. The  MAP-LP is now defined as follows:
\begin{equation}\label{triple}
\begin{aligned}
&\text{maximize}_{\mu} \sum_{(i) \in \mathcal{I}_F} \langle \theta_{i}, \mu_{i} \rangle + \sum_{(i,j) \in \mathcal{I}_F} \langle \theta_{ij}, \mu_{ij} \rangle + \sum_{(i,j,k) \in \mathcal{I}_F} \langle \theta_{ijk}, \mu_{ijk} \rangle \\
&\begin{tabular}{ >{$}l<{$} >{$}l<{$} >{$}l<{$} >{$}l<{$}}
\text{subject to  } & \sum_{x_i} \mu_{i}(x_i) = 1 & \forall (i) \in \mathcal{I}_F, & \\ %\forall x_i \in \{0,1\}
								& \sum_{x_j} \mu_{ij}(x_i,x_j) = \mu_i(x_i) & \forall (i,j) \in \mathcal{I}_F, & \\ %\forall (x_i,x_j) \in {\{0,1\}}^2\\
								& \sum_{x_i} \mu_{ij}(x_i,x_j) = \mu_j(x_j) & \\
								& \mu_{ijk}(x_i,x_j,x_k) \geq 0 & \forall (i,j,k) \in \mathcal{I}_F, & \\ %\forall (x_i,x_j,x_k) \in {\{0,1\}}^3 \\
								& \sum_{x_k} \mu_{ijk}(x_i,x_j,x_k) = \mu_{ij}(x_i,x_j)& \\
								& \sum_{x_j} \mu_{ijk}(x_i,x_j,x_k) = \mu_{ik}(x_i,x_k)& \\
								& \sum_{x_i} \mu_{ijk}(x_i,x_j,x_k) = \mu_{jk}(x_j,x_k)&
\end{tabular} \ \\
&\text{where  } \langle \theta_{ijk}, \mu_{ijk} \rangle = \sum_{(x_i,x_j,x_k) }  \theta_{ijk}(x_i,x_j,x_k) \cdot \mu_{ijk}(x_i,x_j,x_k)
\end{aligned}
\end{equation}
This MAP-LP has to be instantiated for each MLN after we have computed a ground MRF induced by the MLN and a set of constants. 

The fact that both MLNs and RLPs are based on logic programming features a different and more convenient translation from an MLN to a 
MAP LP for inference. Actually, each MAP-LP is defined through weights, marginals, and triples of variables induced by the MLN formulas.
% in an MLN and every combination of values of atoms in that formula. 
This generation can naturally be specified at the lifted level as done in the RLP in Fig.~\ref{lp:map}. % shows the RLP doing exactly this. 
Since it is defined at the lifted level abstracting from the specific MLN, it clearly answers  {\bf (Q1)} affirmatively. To see this, consider
to perform inference in the well known smokers MLN. Here, there are two rules. The first rule says that smoking can cause cancer 
\begin{equation*}
 0.75 \ \mathtt{smokes(X) \ \imp \ cancer(X)}, 
 \end{equation*}
 and the second implies that if two people are friends then they are likely to have the same smoking habits 
 \begin{equation*}
 0.75 \ \mathtt{smokes(X) \ \imp \ friends(X, Y), smokes(X)}.
 \end{equation*}
Since the second formula contains three predicates using the triplewise MAP-LP is valid. Now the  
%The fact that both MLNs and RLPs are based on logic programming features an especially convenient translation from an MLN to a 
%MAP LP for inference. 
the MLN as well as the used constants are encoded in the following LogKB:
\begin{lstlisting}[language=ampl,frame=none,basicstyle=\footnotesize\ttfamily,numbers=none,backgroundcolor=\color{blue!10}]
person(anna).  person(bob).  ...   #the people in the social network
value(0).  value(1).               #we consider binary MLNs   

#encoding of the MLN clauses and their weights
w(smokes(X), cancer(X), 1, 0) = 0 :- person(X).
w(smokes(X), cancer(X), V1, V2) = 0.75 :- 
                   person(X), value(V1), value(V2).

w(friends(X, Y), smokes(X), smokes(Y), 1, 1, 1) = 0.75 :- 
                   person(X), person(Y).
w(friends(X, Y), smokes(X), smokes(Y), 1, 0, 0) = 0.75 :- 
                   person(X), person(Y).
w(friends(X, Y), smokes(X), smokes(Y), V1, V2, V3) = 0 :- 
                   person(X), person(Y), value(V1), value(V2), value(V3).

m(smokes(gary), 1).                     #the smokers we know  
m(smokes(helen), 1).  

m(friends(anna, bob), 1).               #the observed friendship relations
...
m(friends(helen,iris), 1).
\end{lstlisting} 
This examples gives us an opportunity to introduce another benefit of mixing logic with arithmetics, namely the ability to easily represent such concepts as indicator functions. In the MAP LP we can add an "indicator" predicate {\tt clause(X, Y)} which equals to 1 if {\tt X} and {\tt Y} are two atoms that form a clause in an MLN and is 0 otherwise. This results in the following extension to the LogKB:
\begin{lstlisting}[language=ampl,frame=none,basicstyle=\footnotesize\ttfamily,numbers=none,backgroundcolor=\color{blue!10}]
clause(smokes(X), cancer(X)) = 1 :- person(X).
\end{lstlisting} 
We can now write the LP constraints in a more mathematical way using constraints such as
\begin{lstlisting}[language=ampl,frame=none,basicstyle=\footnotesize\ttfamily]
subject to: {pred(P1), pred(P2), val(V2)} 
     (m(P2,V2) - clauseMarg2(P1,P2,V2))*clause(P1, P2) = 0;
\end{lstlisting}
The resulting program is equivalent to the previous version, but can be more straightforward for some people with an optimization background.
Both programs show that MAP-LP inference within MLNs can be compactly represented as an RLP. Only the LogKB changes when the
MLN and/or the evidence changes. Moreover, the RLP does not rely on the fact that we consider MLNs. Propositional 
models can be encoded in the LogKB, too. Hence the RLP extracts the essence of MAP-LP inference in probabilistic models, whether
relational or propositional. This supports even more that {\bf (Q1)} can be answered affirmatively.

\begin{figure*}[t]
\centering
\subfloat[Number of variables in the lifted and ground LPs.]{
\includegraphics[width=0.3\textwidth]{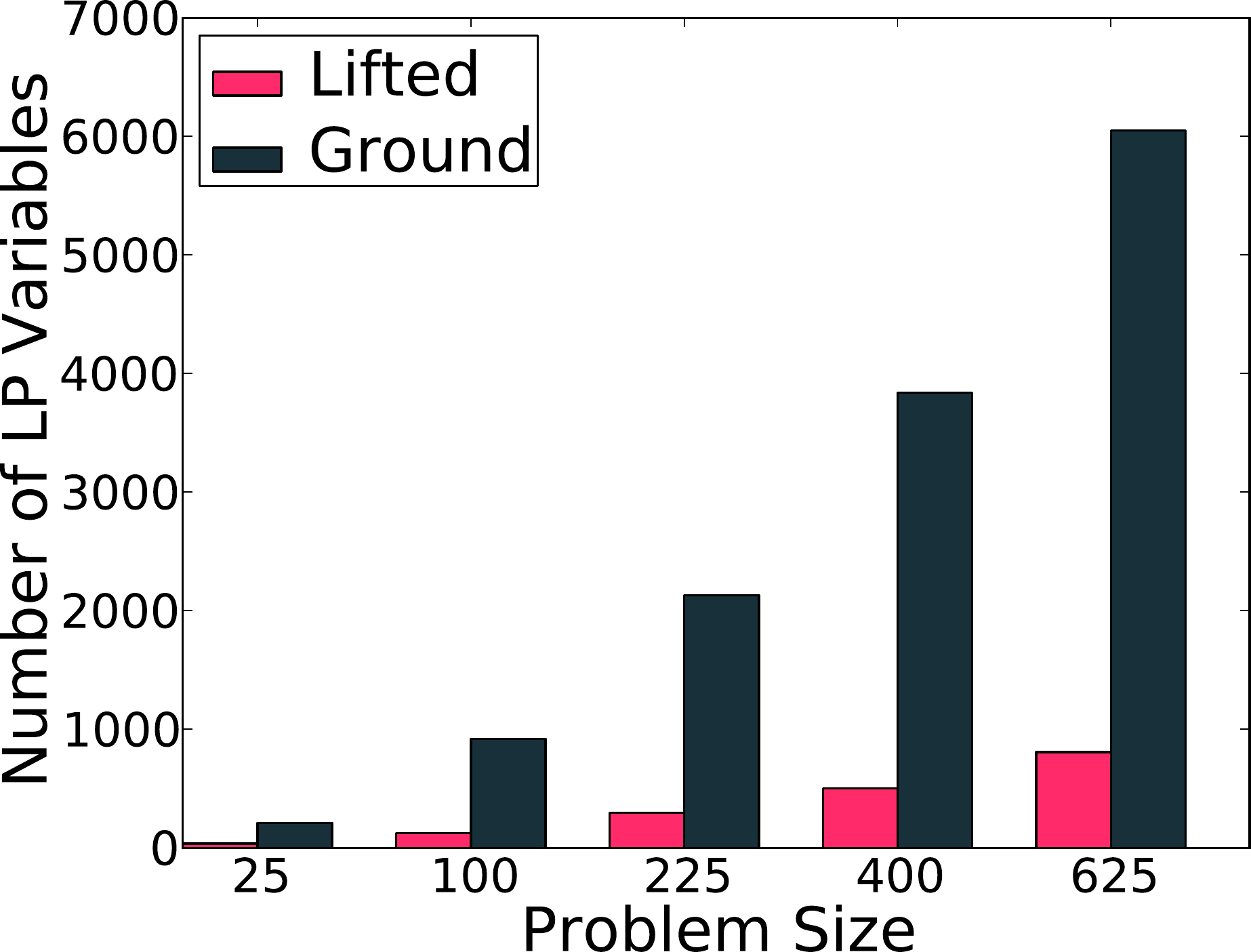}
}
\qquad
\subfloat[Time for solving the ground LP vs. time for lifting and solving.]{
\includegraphics[width=0.3\textwidth]{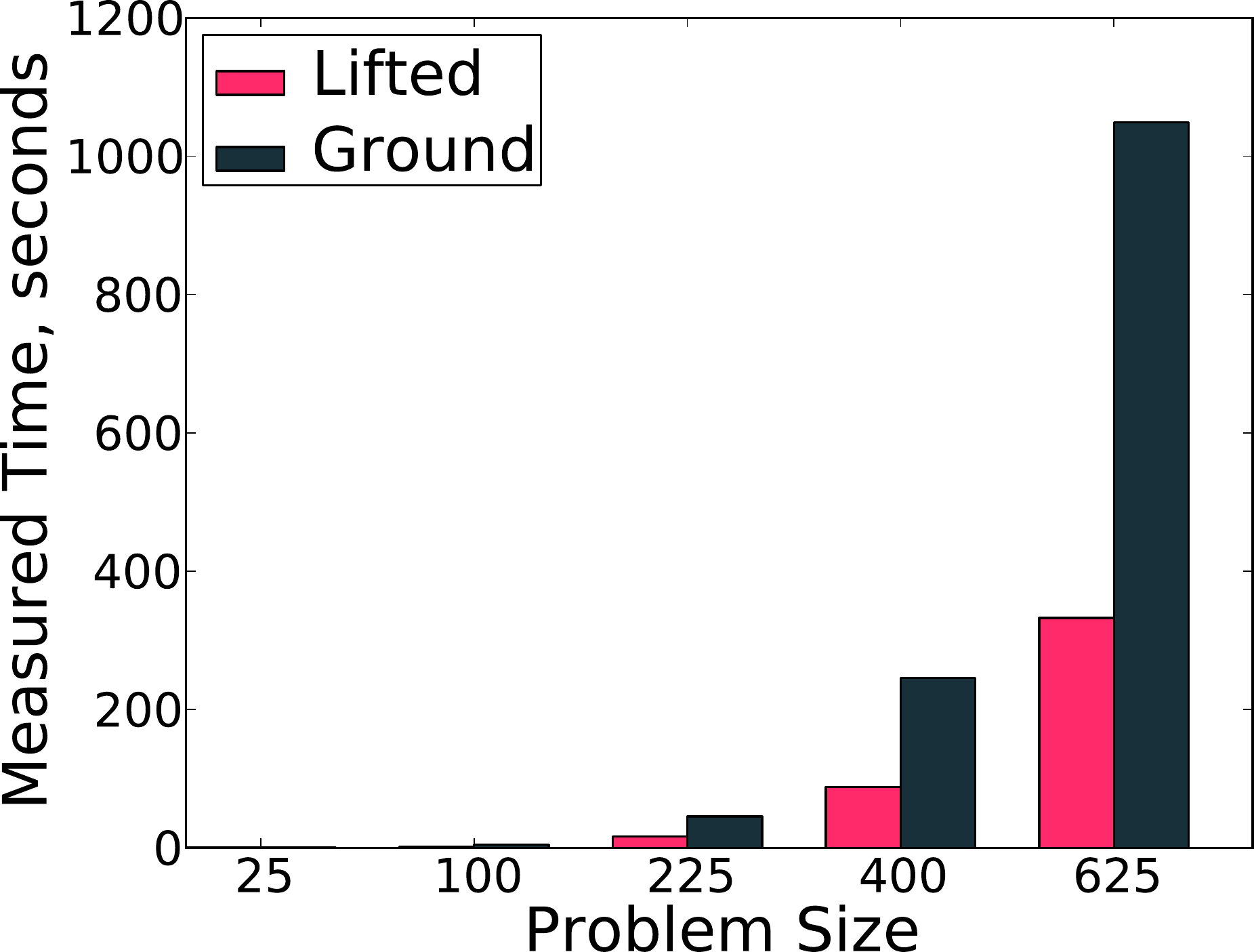}
}
\caption{\label{fig:ExpsMLN} Experimental results of relational linear programming for MAP-LP inference within Markov logic networks.}
\end{figure*}

Let us now turn towards investigating {\bf Q2}. As shown in previous works, inference in graphical models can be dramatically sped-up using lifted inference.
%Furthermore, a relaxed version of MAP inference can be solved using LPs, 
%following well-known
%relaxation, 
%see e.g.~\cite{globersonJ07} for details.
%\begin{align*}
%    \max\nolimits_{\mathbf{\mu}} \ \ \ \ & %\sum\nolimits_{ij\in E}\sum\nolimits_{x_i,x_j}\theta_{i,j}%(x_i,x_j)\mu_{ij}(x_i,x_j)\\
%    &\; \; \; + %\sum\nolimits_i\sum\nolimits_{x_i}\mu_i(x_i)\theta_i(x_i)\\ 
%    \text{s.t.} \ \ & \sum\nolimits_{x_i} \mu_{ij}(x_i,x_j) = \mu_j(x_j),\\ &\sum\nolimits_{x_j} \mu_{ij}(x_i,x_j) = \mu_j(x_i),\ \ \sum\nolimits_{x_i}\mu_i(x_i) = 1.
%\end{align*}
Thus, it is natural to expect that the symmetries in graphical models which can be exploited by standard lifted 
inference techniques will also be reflected in the corresponding MAP-(R)LP. To verify whether this is indeed
the case we induced MRFs of varying size from a pairwise smokers MLN~\cite{fierensKDCM12}. In turn, we
used a pairwise MAP-RLP following essentially the same structure as the triplewise MAP-RLP above but restricted to pairs. 
We scaled the number of random variables from $25$ to $625$ arranged
in a grid with pairwise and singleton factors with identical potentials. The results of the experiments can be seen in Figs.~\ref{fig:ExpsMLN}(a)
and (b). As Fig.~\ref{fig:ExpsMLN}(a) shows, the number of LP variables is significantly reduced.
Not only is the linear program reduced, but due to the fact that the lifting is carried out only once, we also measure a considerable
decrease in running time as depicted in Fig.~\ref{fig:ExpsMLN}(b). Note that the time for the lifted experiment includes the time needed to compile the LP.
This affirmatively answers {\bf (Q2)}.

\subsection{Programming Collective Classification using LP-SVM}
Networks have become ubiquitous, and often we are interested in how objects in these networks influence each other. 
Consequently, collective classification has received a lot of attention recently~\cite{chakrabartiDI98,neville2000iterative, neville2003collective, neville2007relational, richardson2006markov,senNBGGE08}. It refers to the task of jointly classifying a set of inter-related objects.
It exploits the fact that 
%although most of the conventional learning methods assume training instance independence in reality this assumption often doesn't hold. In many problems instances 
inter-related objects often share a lot of similarities. For example, in citation networks there are dependencies among the topics of a papers' references, 
and in social networks  people who are in a close contact tend to have similar interests. Using these dependencies instead of trying to fight them allows collective classification methods to outperform methods that assume object independence~\cite{chakrabartiDI98}.

Despite being successful, most of the research in collective classification so far has focused on generative models, at least as part of column generation approach
to solving quadratic program 
formulations of the collective classification task.
%Yet it is quite natural to phrase a collective classification task as a transductive inference problem \cite{vapnik1998statistical}. Transductive inference is a direct reasoning from observed to unobserved instances without an inductive hypothesis finding problem. That is, if we know a concept for a number of objects resp. instances, for example topics of papers, and have information about relationships between instances, for example citations between papers, we might try to predict topics for related but with unknown topics papers based on the relationship to the observed ones and, possibly, some other attributes. Although transductive SVMs are well-known,
%surprisingly little attention has been paid to use them for collective inference. Indeed, Klein {\it et al.}~\cite{kleinBS08} used generative models within
%a structured SVM. ~\cite{torkamaniL13}  
%but --- to the best of our knowledge --- no principled SVM approach has been proposed. 
%have been known since Vapnik, to the best of our knowledge they haven't been widely applied to collective classification.
%Yet it is known that, although generative models are more expressive, their asymptotic error is higher compared to the error of discriminative models \cite{ng2002discriminative}. 
%The main reasons for that is probably the lack of relational modeling tools allowing one to incorporate information about relations among objects into 
%mathematical programs. 
%discriminative models, which often serve as a basis for transductive models. 
We here illustrate that relational linear programming could provide a first step towards a principled large-margin approach. Specifically, we 
introduce a transductive\footnote{Transductive inference is a direct reasoning from observed to unobserved instances without an inductive hypothesis finding problem\cite{vapnik1998statistical}.} collective SVM based on RLPs. That is, since we observe a class label for a number of objects resp.~instances, say, the topics of papers and we have information about relationships among instances, say the citations among papers, we predict the topics of related but unlabeled papers based on the relationship to the observed ones and, possibly, some other attributes.

More precisely, we seek the largest separation between labeled and unlabeled
nodes through regularization within a relational linear program approximation to SVMs.  We start off by reviewing the vanilla LP approach to 
SVMs and then show how RLPs can be used to program a transductive collective classifier.
  \begin{figure}[t]
\begin{lstlisting}[language=ampl]
var slack/1;                              #the slacks
var weight/1;                             #the slope of the hyperplane 
var b/0;                                  #the intercept of the hyperplane
var r/0;                                  #margin

slacks=sum{label(I)} slack(I);                                #total slack
innerProd(I)=sum{attribute(_,J)} weight(J)*attribute(I,J);    #hyperplane

#find the largest margin. Here const encodes a trade-off parameter
minimize: -r + const * slacks;     

#examples should be on the correct side of the hyperplane 
subject to {label(I)}: label(I)*(innerProd(I) + b) + slack(I) >= r;
#weights are between 0 and 1
subject to {attribute(_, J)}: -1 <= weight(J) <= 1;
subject to : r >= 0;         #the margin is positive
subject to {label(I)}: slack(I) >= 0;        #slacks are positive
\end{lstlisting}
\caption{A linear programming SVM encoded as an RLP. Note, for convenience, we now use a {\tt minimize} instead of a {\tt maximize} statement.\label{rlp:lpsvm}}
\end{figure}

Support vector machines (SVMs)~\cite{vapnik1998statistical} are the most widely used model for discriminative classification at the moment. SVMs' hypothesis space is the space of linear models over numeric attributes of the data. Training is done by minimizing the number of misclassified examples and maximizing the gap between correctly classified examples and the separating hyperplane, with the squared norm of the weight vector as a normalization factor. This task is traditionally posed as a quadratic optimization problem (QP). Zhou {\it et. al.}~\cite{zhou2002linear} have shown that the same problem can be modeled as an LP with only a small loss in generalization performance. The LP they suggested is the following:
\begin{align*}
\operatorname{minimize}\quad  &\;-r + C \sum\nolimits_{i=1}^l \xi_i\\ 
\text{subject to } &\; y_i(\wv \xv_i + b) \geq r - \xi_i\;, \\
 &-1 \leq \wv_i \leq 1\;,\\
& \xi_i \geq 0 \;,\\
&r \geq 0\;,
\end{align*}
and we refer to \cite{zhou2002linear} for more details. This LP-SVM can readily be applied to classify papers in the Cora dataset~\cite{sen:aimag08}.
The Cora dataset consists of $2708$ scientific publications 
classified into one of seven classes. The citation network consists of 5429 links. Each publication in the dataset is described by a 0/1-valued word 
vector indicating the absence/presence of the corresponding word from the dictionary. The dictionary consists of $1433$ unique words. 
We turned this problem into a binary classification problem by taking the most common of 
$7$ classes as a positive class and merging the other $6$ 
into a negative class.  

To do so, we have to provide the RLP and the corresponding LogKB.  The RLP in Fig.~\ref{rlp:lpsvm} encodes the 
the vanialla LP-SVM. Since it is not collective, we completely ignored the 
citation information and classified documents based on 0/1 word features using the following LogKB: 
%A training set at hand is represented by a LogKB such as:
\begin{lstlisting}[language=ampl,frame=none,basicstyle=\footnotesize\ttfamily,numbers=none,backgroundcolor=\color{blue!10}]
const = 0.021.               
               
attribute(31336, 119).  attribute(31336, 126).  ...
label(17798) = -1.  label(10531) = 1.  ...
\end{lstlisting}
Here the arguments of the {\tt attribute/2} predicate are indices of a document and a word present in a document respectively. Words that are not present in a document, i.e., they have value zero in the original dataset, are not specified. Again, this answers {\bf (Q1)} affirmatively, since the RLP stays fix for different datasets.

We now show how to transform this vanilla (R)LP-SVM model into a transductive collective one (TC-RLP-SVM) just by programming. 

Indeed, there are
a number of ways to do this. We chose the following approach. We added constraints which ensure that unlabeled instances have the same label as the labeled ones to which they are connected by a citation relation. To account for contradicting examples, in the spirit of SVMs, we introduced slack variables for these constraints and added them to the 
objective with a separate parameter. This resulted in the changes to the RLP-SVM shown in Fig.~\ref{rlp:tclpsvm}.
Here, the new predicate {\tt pred/2} denotes the predicted label for unlabeled instances. The LogKB gets two new predicates:
\begin{lstlisting}[language=ampl,frame=none,basicstyle=\footnotesize\ttfamily,numbers=none,backgroundcolor=\color{blue!10}]
const(1) = 0.0021.  const(2) = 0.0031.

cite(89547, 1132385).  cite(89547, 1152379).  ...
query(1128959).  query(16008).  ...
\end{lstlisting}
\begin{figure}[t]
\begin{lstlisting}[language=ampl]
var pred/1;      #predicted label for unlabeled instances
var slack/2;     #slack between neighboring instances
... 
slacks1 = sum{label(I)} slack(I);                       #total label slack
slacks2 = sum{label(I1,l2)} slack(I1,l1);         #total inter-label slack
#find the largest margin. Here the consts encode trade-off parameters
minimize: -r + const(1) * slacks1 + const(2) * slacks2;              
...   
#examples should be on the correct side of the hyperplane            
subject to {query(I)}: pred(I) = innerProd(I) + b;
#related instances should have the same labels. 
subject to {cite(I1, I2), label(I1), query(I2)}: 
	label(I1) * pred(I2) +  slack(I1, I2) >= r;
#the symmetric case	
subject to {cite(I1, I2), label(I2), query(I1)}: 
	label(I2) * pred(I1) + slack(I1, I2) >= r;
\end{lstlisting}
\caption{An RLP-SVM model for collective inference in a transductive setting. Shown are only the changes and add-ons to the vanilla RLP-SVM model from Fig.~\ref{rlp:lpsvm}.\label{rlp:tclpsvm}}
\end{figure}
The {\tt cite/2} predicate represent citation information, and the {\tt query} predicate marks unlabelled instances whose labels are to be inferred. We notice that the parameters in the objective play a radically different role in the TC-RLP-SVM. In the vanilla case a parameter has to be carefully chosen on the training phase, but then prediction is done using the learned weight vector only. In the transductive setting the linear model, which is in the heart of RLP-SVM, plays a role of a media between labeled and unlabeled instances. The weights are tuned for every new problem instance (remember, that the whole point of transductive inference is not to learn an intermediate predictive model, but to do inference directly). In this case the objective parameters become a lot more important. In turns out that the optimal value of the parameters depends on the size of a problem and hence they have to be tuned during the transductive inference phase as well using e.g. cross-validation.
 \begin{figure}[t]
 \begin{center}
\subfloat[Box plot for prediction errors of collective and vanilla SVMs using RLPs.\label{fig:box}]{
 \includegraphics[width=0.45\textwidth]{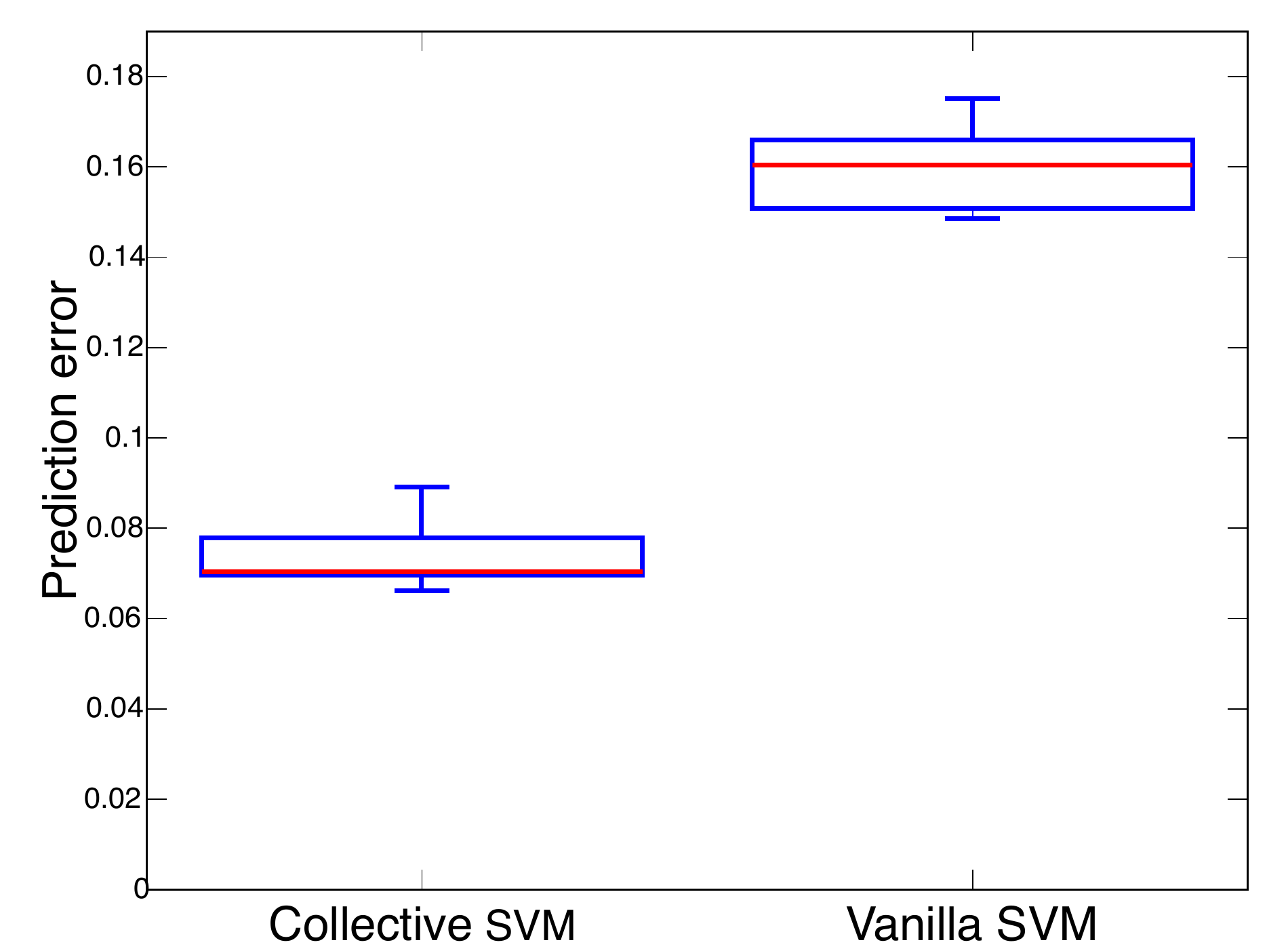}
}\hfill
\subfloat[Lifted solving RLP-SVMs.\label{fig:liftedsvm}]{
\begin{minipage}[b][1\width]{% 
0.5\textwidth} 
\centering
 \includegraphics[width=0.85\textwidth]{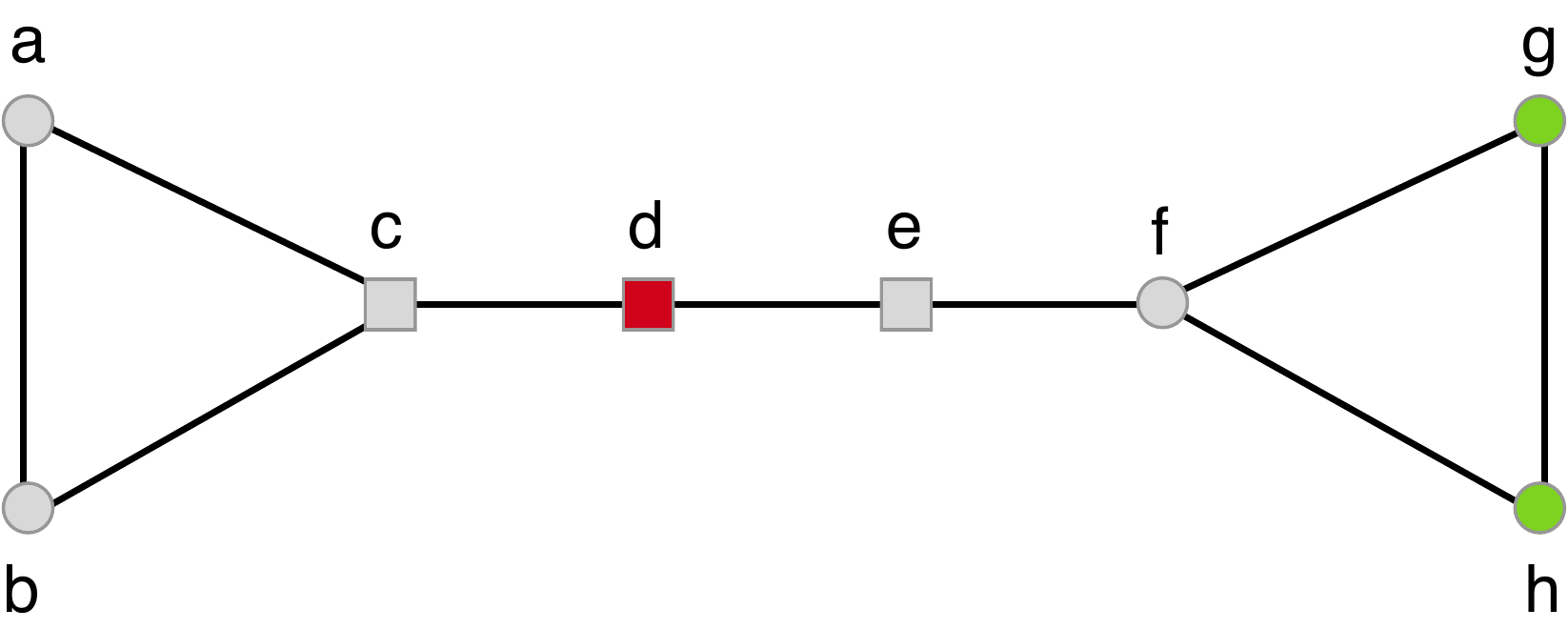}
 \includegraphics[width=0.85\textwidth]{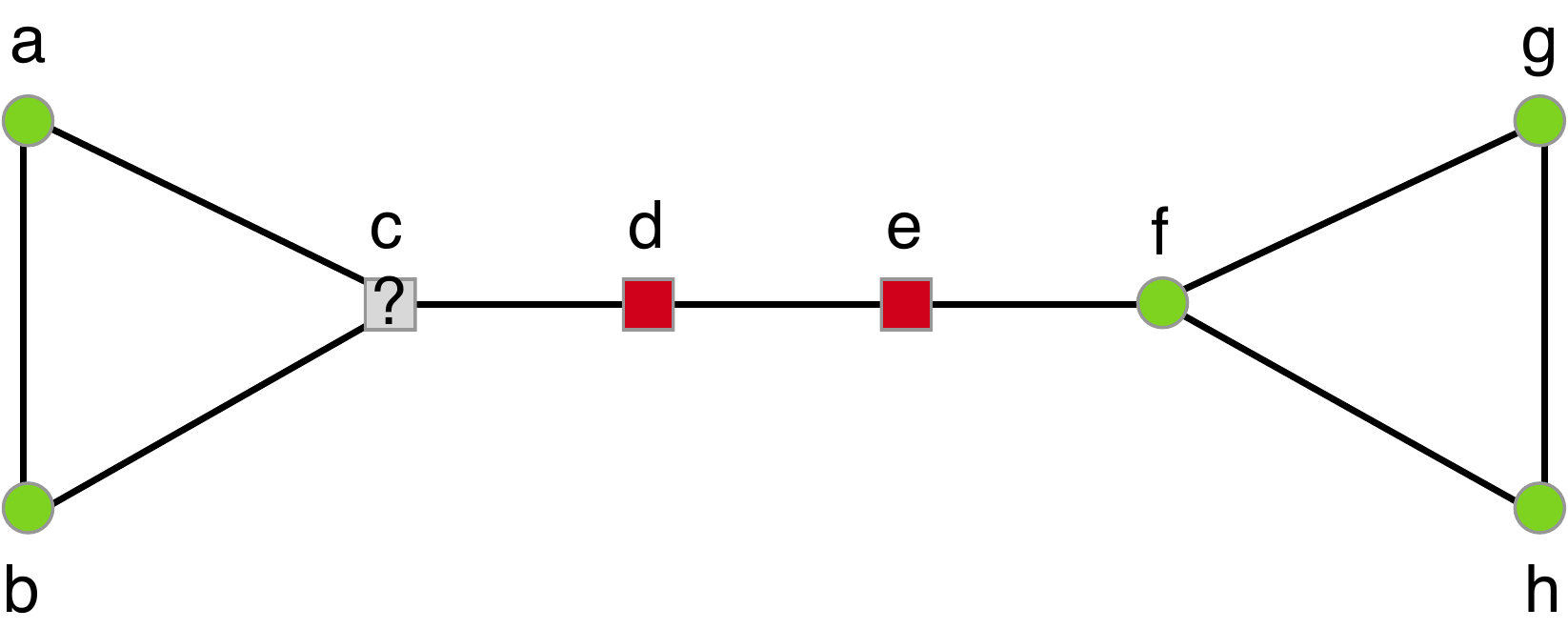}
 \includegraphics[width=0.85\textwidth]{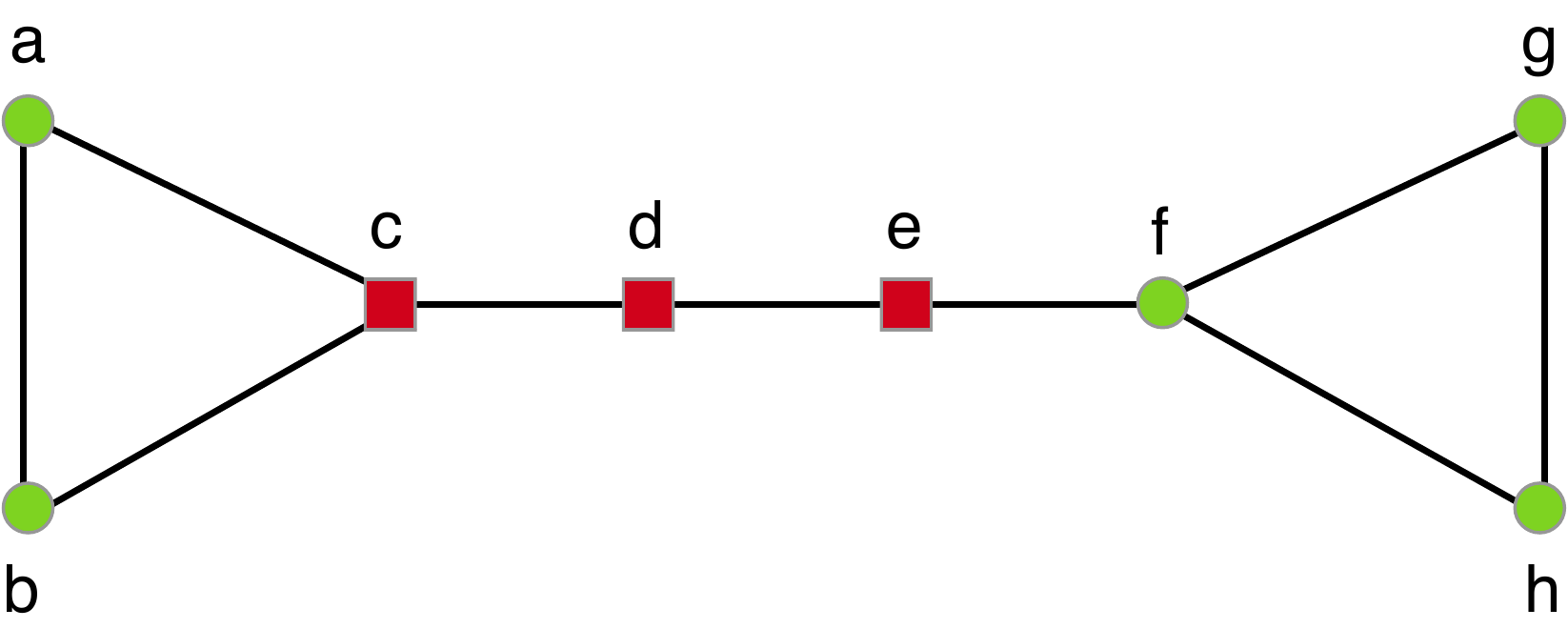}
\end{minipage}} 
 \caption{Experimental results of linear programming support vector machines. (a) TC-RLP-SVM (left) versus vanilla RLP-SVM on the CORA dataset. (b) Liifted RLP-SVM experiments. (Top) Training set. Nodes can have different shapes and different degrees. The task is to predict the colors red resp.~green for the gray nodes. (Middle) predictions of the vanilla RLP. The `?' indicates that the SVM cannot make a definite prediction. (Bottom) Predictions of the TC-RLP-SVM. In both cases the induced LPs were solved using lifted linear programming}
 \end{center}
 \end{figure}

To illustrate the benefit of the TC-RLP-SVM we compared its performance with that of the vanilla RLP-SVM on the task of paper topic classification in the Cora dataset. The experiment protocol was as follows. We first randomly split the dataset into a training set $A$, a validation set $C$, and a test set $B$ 
in proportion $70/15/15$. The validation set was
used to select the parameters of the TC-RLP-SVM in a $5$-fold cross-validation fashion. That is, we
split the validation set into $5$ subsets $C_i$ of equal size. On these sets we performed parameter selection by doing a grid search for each $C_i$ on a 
$A \cup (C  \setminus C_i)$ labeled and $B\cup C_i$ unlabeled examples, computing the prediction error on $C_i$ and averaging it over all $C_i$s. We then evaluate the selected parameters on the test set $B$ whose labels were never revealed in training. We repeat this experiment $5$ times, one for each $C_i$,
 for both the TC-RLP-SVM and the vanilla RLP-SVM. 
 The results are summarized in Fig.~\ref{fig:box}. The vanilla RLP-SVM achieved a prediction error of  $16\pm1\%$. The TC-RLP-SVM 
 achieved $7.5\pm1\%$. A paired t-test ($p=0.05$) revealed that the difference in mean is significant. Although best performance was not
 our goal, the performance is quite encouraging. For instance, Klein {\it et al.}~\cite{kleinBS08} reported
 error rates of about $13\%$ for there structured SVM although using different folds. In any case, 
 %%with a confidence level of 95\%: $t(4) = 13.1$, $p = 0.0002$, i.e. the difference is statistically significant. 
 just reprogramming the RLP-SVM --- adding three relational constraints --- resulted in a $50\%$ error reduction. This clearly answers {\bf (Q3)}
 affirmatively. %: just by programming we get a significant boost in classification performance. 
 %rules we achieved a clear improvement by simply adding 3 constraints to the standard SVM.

 \subsection{Lifted Solving Relational Linear Programming Support Vector Machines}
Finally, to close the loop, we illustrate that RLP-SVMs are liftable, too. Although not surprising from a theoretical perspective --- we have already shown
that any LP that contains symmetries is liftable --- this constitutes the very first symmetry-aware SVM solver and is an encouraging sign that lifting goes beyond probabilistic inference, i.e.,  lifted statistical machine learning may not be insurmountable.

The problem we considered is the one of predicting the color (red/green) of a vertex in an extended version of the so called McKay graph as shown in Fig.~\ref{fig:liftedsvm}(top). 
The only attributes used for classification are the degree of a vertex and its shape (circle or square). Gray nodes are unlabeled. 
This resulted in the following LogKB for
the TC-RLP-SVM:
\begin{lstlisting}[language=ampl,frame=none,basicstyle=\footnotesize\ttfamily,numbers=none,backgroundcolor=\color{blue!10}]
edge(a, b). ... edge(h, g).

label(f) = 1.  label(g) = 1.  label(d) = -1.

sim_edge(X, Y) :- edge(X, Y).
sim_edge(X, Y) :- edge(Y, X).

attribute(a, shape) = 1. ...  attribute(h, shape) = 1.
\end{lstlisting}
and the following additional definition in the RLP:
\begin{lstlisting}[language=ampl,frame=none,basicstyle=\footnotesize\ttfamily,numbers=none]
attribute(X, degree) :- sum <sim_edge(X, _)>  1;
\end{lstlisting}
Due to the model/instance separation property of RLPs we can directly apply the vanilla RLP-SVM from Fig.~\ref{rlp:lpsvm}. This resulted in the color 
predictions shown in Fig.~\ref{fig:liftedsvm}(middle). As one can clearly see, the vanilla LP-SVM can predict correctly colors for nodes {\tt a, b ,e} and {\tt f}, but fails 
to predict a color for the node {\tt c} ( it gives 0 prediction).

Then, we added the same collective constraints we used already in the RLP shown in Fig.~\ref{rlp:tclpsvm}. That is, we constraint uncolored nodes to have the same color as their colored neighbors where possible). Using this neighborhood information allowed the collective LP-SVM to correctly classify all the nodes in the graph as show in 
Fig.~\ref{fig:liftedsvm}(bottom). The reason seems to be that the vanilla RLP-SVM uses both degree and color attributes whereas the collective RLP-SVM
can set the degree weight to zero due to the additional constraints and, hence, makes predictions based only on the shape attribute, which 
is enough to achieve a perfect classification on this graph. 

More interestingly, in both cases there is lifting, i.e., the dimensions of the (induced) LP-SVMs 
were reduced. More precisely, the RLP-SVM
was reduced from $46$ variables and constraints down to $22$, only $48\%$ of the original size. 
The collective version was compressed even slightly more, namely from $63$ down to $29$. This is $46\%$ of the original size. Most interestingly,
the lifted collective RLP-SVM is of smaller size than the original vanilla LP-SVM. This supports an affirmative answers of {\bf (Q2)}.\\

Taking all results together, the illustrations clearly show that all three questions {\bf (Q1)} -- {\bf (Q3)} can be answered affirmatively.

% All the previous examples serve to illustrate flexibility, expressive power and compactness of the language. One can express generative and declarative models equally easy. 
%%Representing relational concepts such as passive smoking, or shortest path in a graph is as simple as writing a logical rule. 
%Logical concepts can be naturally mixed into arithmetic expressions. The language is quite minimalistic by itself and using typed variables in a LogKB allows to avoid redundancy in problem instance definitions. These properties not only simplify specification of existing models, but also facilitate the development of novel ones, like in the example with collective classification. 

%!TEX root = main.tex

\section{Future work}
\label{fut}

Relational linear programming is attractive for many AI and machine learning problems, but much remains to be done. For instance,
the language could be extended with the concepts of modules and name spaces, allowing one to build libraries of relational programs,
as well as combining it with Mattingley and Boyd's~\cite{MattingleySB12} CVXGEN to automatically generate 
custom C code that compiles into a reliable, high speed solver for the problem family at hand.
The framework should also be extended to other mathematical programs such as integer LPs, mixed integer programs, 
quadratic programs, and semi-definite programs, among others. Together with the declarative
nature of this relational mathematical programming approach to AI, one should investigate 
program analysis approaches to automate problem decomposition at a lifted level. If symmetries could be detected and exploited
efficiently in other mathematical program families, too, this would put general symmetry-aware machine learning and AI 
even more into reach. It would also be interesting to investigate infinite relational linear programs. 

%In the current implementation the LogKB supports a very restricted subset of the first-order logic. This allows us to apply efficient bottom-up grounding techniques, but also restricts the expressive power of the language. In principle, one can simply use some efficient Prolog implementation for grounding and by that make the language Turing-complete. We illustrate the possible implications of this extension on the graph kernels design problem.

The most attractive immediate avenue, however, is to explore relational linear programming within AI and machine learning tasks. 
First of all, the novel collective classification approach should be rigorously be evaluated and compared to other approaches 
on a number of other benchmark datasets. Other attractive avenues are the exploration of the symmetry-aware SVMs outlined in the present paper
within other learning setting, 
relational dimensionality reduction via LP-SVMs~\cite{BiBEBS03}, novel relational boosting approaches
via linear programs~\cite{DemirizBS02}, and developing relational and lifted solvers for computing optimal Stackelberg strategies in 
two-player normal-form game~\cite{conitzerS06}, among others. One should also push the programming view on relational 
machine learning tasks.  As a prominent example consider kernels for classifying graphs. Graphs classification is a very important task in bioinformatics~\cite{airola2008all}, 
natural language processing \cite{suzuki2003hierarchical} and many other fields. Kernelised SVM is often the method of choice. The idea is to 
represent an SVM  optimization objective in such a way that instance vectors only appear in inner products with each other. These inner products 
can then be replaced by functions (kernels) that effectively represent inner products in higher dimensional space. This inner product view
on one hand makes SVM a non-linear classifier but also allows one to deal with structured objects such as graphs e.g. using convolution
kernels~\cite{haussler1999convolution}. Convolutions kernels introduced the idea that kernels could be built
to work with discrete data structures interactively from kernels for smaller composite parts. RLPs suggests to view them 
as a programming task. Within the logical knowledge base we define the parts and a generalized sum over products --- a generalized
convolution --- is realized within the relational mathematical program. Similarly, many other graph kernels known could be realized.
%Selecting a kernel function that is a good approximation of similarity between two graphs (which is in fact the problem of determining graph isomorphism) and is at the same time tractable is the main focus of the graph kernels design. Generally speaking, graph kernels are based on comparison of graph substructures. 
Walks \cite{gartner2002exponential}, cyclic patterns \cite{horvath2004cyclic} and shortest-paths \cite{borgwardt2005shortest} are examples of substructures considered so far. One can easily see that all these concepts are naturally representable as a logic programs in Prolog. Assume e.g. 
that $\mathtt{shortestPath(A, B, G, Paths)}$ computes in {\tt Path} the shortest path between nodes {\tt A} and {\tt B} in graph {\tt G}
%In fact, one can imagine formulating a knowledge base describing the most common graph substructures. I.e. the concept of all the shortest paths in a graph can be described by the following Prolog rules:
For instance, we can program the shortest path in Prolog as follows:
%\begin{minted}[linenos,
%	      fontsize = \footnotesize,
%               numbersep=5pt]{rlp}
%path(A, B, G, Len) :- travel(A, B, G, [A], Len).
%
%travel(A, B, G, P, L) :- connected(A, B, G, L).
%travel(A, B, G, Visited, L) :- connected(A,C,D), C \== B,
%                               \+member(C,Visited), 
%                               travel(C, B, [C|Visited], L1), 
%                               L is D + L1.
%
%shortest(A, G, Paths) :- length(_, G, B), setof([A, B, Length], 
%                          shortest(A, B, G, Length), Paths).
%
%shortest(A, B, G, Length) :- setof(L,path(A,B,L),Set), 
%                             Set = [_|_], minimal(Set, Length).
%\end{minted}
We can then use it to define the convolution kernel in an RLP SVM as follows:
\begin{lstlisting}[language=ampl,frame=none,basicstyle=\footnotesize\ttfamily]
k(G1, G2) = sum{vertex(G1,V1), vertex(G1,V2), 
  shortestPath(V1,V2,G1,L1),  vertex(G2,V3), vertex(G2,V4), 
  shortestPath(V3,V4,G2,L2)}       
             simple_k(V1,V2,L1,V3,V4,L2).
simple_k(V1,V2,L1,V3,V4,L2) = ...  
\end{lstlisting}
where {\tt simple\_k} is any kernel on vertices and lengths. %That is relational mathematical programming suggests a programming view on graph kernels. 
%For instance, the shortest paths can directly be used within an Designing and implementing graph kernels then boils down to simply combining these predefined concepts in an aggregate expression corresponding to a kernel in an LP SVM template. 
That is, instead of introducing a small change as a novel kernel and proving that it is a valid kernel, one just programs it; one separates the kernel programming from the mathematical program.

%!TEX root = main.tex

\section{Conclusion}
We have introduced relational linear programming, a simple framework
combining linear and logic programming. 
Its main building block are relational linear programs (RLPs). They are compact LP templates defining
the objective and the constraints through the logical concepts of individuals, relations, and quantified variables. 
This contrasts with mainstream LP template languages such AMPL, which mixes imperative and linear programming, and 
allows a more intuitive representation of optimization problems over relational domains where we have
to reason about a varying number of objects and relations among them, without enumerating them.
Inference in RLPs is performed by lifted linear programming. That is, symmetries within the ground linear program are employed to reduce its dimensionality, if possible, and the reduced program is solved using any off-the-shelf linear program solver. This significantly
extends the scope of lifted inference since it paves the way for lifted LP solvers for linear
assignment, allocation and and any other AI task that can be solved using LPs. 
Empirical results on approximate inference in Markov logic networks using LP relaxations, on solving Markov decision processes, and on collective inference using LP support vector machines illustrated the promise of relational linear programming. 

\section*{Acknowledgements} This research was partly supported by the Fraunhofer ATTRACT fellowship STREAM, by the EC under contract number FP-248258-First-MM, and by the German Science Foundation (DFG), KE 1686/2-1.

%% The Appendices part is started with the command \appendix;
%% appendix sections are then done as normal sections
%% \appendix

%% \section{}
%% \label{}

%% If you have bibdatabase file and want bibtex to generate the
%% bibitems, please use
%%

\bibliographystyle{alpha} 
\bibliography{biblio}

%% else use the following coding to input the bibitems directly in the
%% TeX file.

% \begin{thebibliography}{00}

%% \bibitem[Author(year)]{label}
%% Text of bibliographic item

%\bibitem[ ()]{}
%
%\end{thebibliography}`'
\end{document}